\pgfplotsset{compat=newest}
\theoremstyle{plain}
\newtheorem{theorem}{Theorem}[section]
\newtheorem{proposition}[theorem]{Proposition}
\newtheorem{corollary}[theorem]{Corollary}
\theoremstyle{definition}
\newtheorem{definition}[theorem]{Definition}
\newtheorem{example}{Example}
\theoremstyle{remark}
\newtheorem{remark}[theorem]{Remark}
\newtheorem{lemma}[theorem]{Lemma}
\pgfplotsset{compat=1.15}
\lstdefinelanguage{json}{%
basicstyle={\normalfont\ttfamily},%
stringstyle=\color{black!90},%
numbers=left,%
numberstyle=\scriptsize,%
stepnumber=1,%
numbersep=8pt,%
showstringspaces=false,%
breaklines=true,%
backgroundcolor=\color{gray!10},%
string=[s]{"}{"},%
comment=[l]{:\ "},%
morecomment=[l]{:"},%
literate=%
  *{0}{{{\color{black!70}0}}}{1}%
  {1}{{{\color{black!70}1}}}{1}%
  {2}{{{\color{black!70}2}}}{1}%
  {3}{{{\color{black!70}3}}}{1}%
  {4}{{{\color{black!70}4}}}{1}%
  {5}{{{\color{black!70}5}}}{1}%
  {6}{{{\color{black!70}6}}}{1}%
  {7}{{{\color{black!70}7}}}{1}%
  {8}{{{\color{black!70}8}}}{1}%
  {9}{{{\color{black!70}9}}}{1}}%
\definecolor{pdtblack}{RGB}{0,0,0}
\definecolor{pdtblue}{RGB}{11,93,174}
\definecolor{pdtdarkblue}{RGB}{6,26,64}
\definecolor{pdtdarkgreen}{RGB}{49,92,43}
\definecolor{pdtdarkpurple}{RGB}{50,14,59}
\definecolor{pdtdarkred}{RGB}{61,19,8}
\definecolor{pdtgray}{RGB}{120,120,120}
\definecolor{pdtgreen}{RGB}{100,161,27}
\definecolor{pdtlightblue}{RGB}{59,175,236}
\definecolor{pdtlightgreen}{RGB}{149,198,35}
\definecolor{pdtlightpurple}{RGB}{197,137,232}
\definecolor{pdtlightred}{RGB}{206,62,21}
\definecolor{pdtpurple}{RGB}{106,20,125}
\definecolor{pdtred}{RGB}{145,0,33}
\definecolor{pdtwhite}{RGB}{255,255,255}
\definecolor{pdtyellow}{RGB}{232,163,26}
\begin{document}

\copyrightyear{2026}
\acmYear{2026}
\setcopyright{none}
% \setcctype{by-nc-nd}  % disabled for arXiv preprint to avoid missing CC badge images
\acmConference[HSCC '26]{29th ACM International Conference on Hybrid Systems: Computation and Control}{May 11--14, 2026}{Saint Malo, France}
\acmBooktitle{29th ACM International Conference on Hybrid Systems: Computation and Control (HSCC '26), May 11--14, 2026, Saint Malo, France}
\acmDOI{10.1145/3801146.3805668}
\acmISBN{979-8-4007-2566-1/2026/05}

\author{Peng Xie}
\affiliation{%
  \department{Department of Computer Engineering}
  \institution{TUM School of Computation, Information and Technology, Technical University of Munich}
  \city{Heilbronn}
  \country{Germany}}
\email{p.xie@tum.de}

\author{Johannes Betz}
\affiliation{%
  \department{Department of Autonomous Vehicle Systems}
  \institution{TUM School of Engineering and Design, Technical University Munich}
  \city{Garching}
  \country{Germany}}
\email{johannes.betz@tum.de}

\author{Amr Alanwar}
\affiliation{%
  \department{Department of Computer Engineering}
  \institution{TUM School of Computation, Information and Technology, Technical University of Munich}
  \city{Heilbronn}
  \country{Germany}}
\email{alanwar@tum.de}

\begin{abstract}
Optimal path planning in nonconvex free spaces poses substantial computational challenges. A common approach formulates such problems as mixed-integer linear programs (MILPs); however, solving general MILPs is computationally intractable and severely limits scalability. To address these limitations, we propose \textbf{HZ-MP}, an \emph{informed Hybrid Zonotope-based Motion Planner}, which decomposes the obstacle-free space and performs low-dimensional face sampling guided by an ellipsotope heuristic, thereby concentrating exploration on promising transition regions. This structured exploration mitigates the excessive wasted sampling that degrades existing informed planners in narrow-passage or enclosed-goal scenarios. We prove that HZ-MP is \emph{probabilistically complete} and \emph{asymptotically optimal}, and demonstrate empirically that it converges to high-quality trajectories within a small number of iterations.
\end{abstract}

\title{Informed Hybrid Zonotope-based Motion Planning Algorithm}

\maketitle

 \section{Introduction}\label{sec:intro}
Motion planning in robotics requires finding collision-free paths through complex environments. A common approach encodes obstacle avoidance using mixed-integer programs (MIPs), which enable optimal trajectory generation under dynamic and kinematic constraints. However, solving the resulting mixed-integer linear or quadratic programs remains computationally hard~\cite{Ioan2021}. Recent advances have aimed to mitigate this complexity. For example, the hybrid zonotope representation of obstacle-free space has been introduced to compactly model nonconvex regions with a combination of continuous and binary variables~\cite{Bird2023}. By using hybrid zonotopes to encode obstacles within an MPC framework, one can tighten convex relaxations and exploit problem structure, yielding order-of-magnitude speed-ups in mixed-integer solver performance~\cite{Robbins2024}. Other works develop multi-stage Mixed-Integer Quadratic Programming (MIQP) methods that replace direct nonconvex computations with structured decompositions, employing tailored branch-and-bound algorithms integrated with interior-point solvers to accelerate the optimization process~\cite{Robbins2024}. However, MIQP methods still face fundamental limitations in complex scenarios. In high-dimensional settings, MIQP suffers from the curse of dimensionality: an explosion in the number of branch-and-bound nodes caused by many integer variables, large constraint matrices, and dramatically increased solution times. Furthermore, the branch-and-bound procedure offers limited predictability—it may find solutions quickly or require exploring exponentially many branches—making it difficult to provide strong real-time guarantees~\cite{shoja2022overall}. These inherent limitations of optimization-based approaches motivate the exploration of alternative paradigms for efficient motion planning.

Another class of motion planners relies on random sampling to explore the state space. Early algorithms such as the probabilistic roadmap (PRM) and rapidly-exploring random tree (RRT) families are \emph{probabilistically complete} but not optimal by design~\cite{Kavraki1996,Karaman2011}. The \emph{informed} variants bias samples toward cost-bounded ellipsoidal subsets to accelerate convergence; e.g., Informed RRT* improves substantially over uniform RRT*~\cite{gammell_informed_2014}. Notably, adaptively informed trees and effort informed trees employ an asymmetric bidirectional search in which two growing trees continuously guide each other using updated cost-to-go estimates~\cite{strub_ijrr22}. However, stochastic exploration can stall in narrow-passage or double-enclosure scenarios, where a large proportion of samples falls outside reachable regions~\cite{orthey2021section,szkandera2020narrow}, significantly slowing convergence toward an optimal solution~\cite{strub_ijrr22}.

% We propose HZ-MP (Hybrid Zonotope-based Motion Planner), which synergizes optimization-based and sampling-based planning through four key innovations in fig.~\ref{overview}: (a) decomposing free space into hybrid zonotopes—finite collections of convex leaves with discrete transitions, (ii) computing adjacency between leaves, (iii) sampling on $(n-1)$-dimensional shared faces rather than $n$-dimensional volumes, and (iv) pruning search regions using ellipsotope-based reachability bounds.
% And the process can be illustrated by the following pictures.

\begin{figure*}[h!]
    \centering
\includegraphics[width=\textwidth]{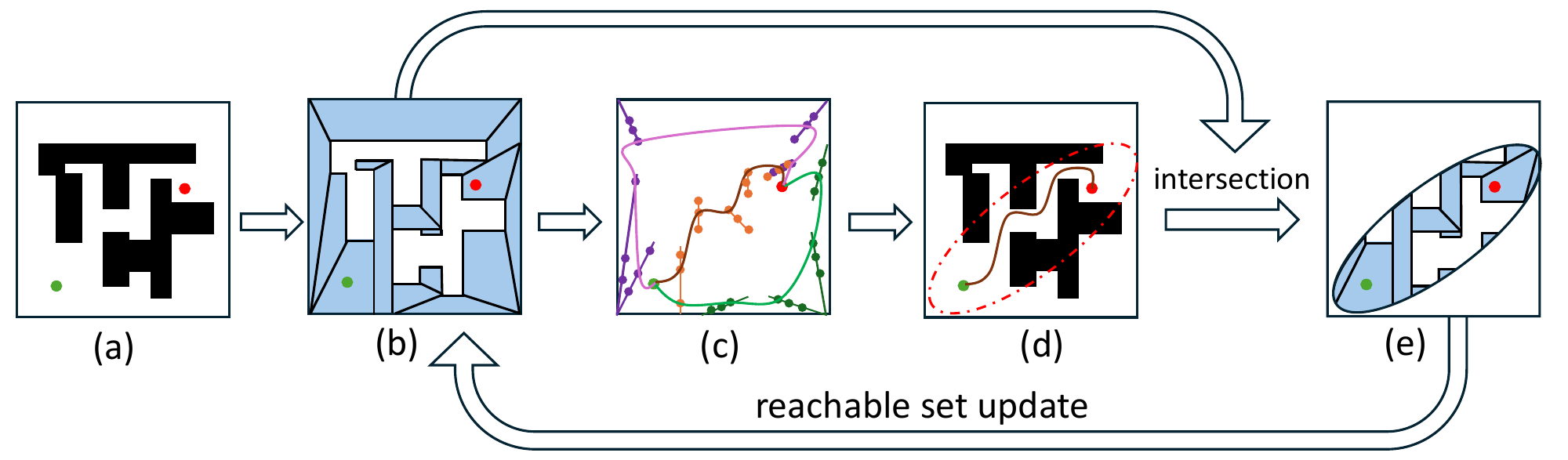}
    \caption{Overview of HZ-MP process illustration. 
    (a) Original environment with obstacles (black), start (green), and goal (red). (b) Hybrid zonotope representation of obstacle-free space decomposed into convex leaves. (c) Three possible connected paths identified through adjacency computation, with the brown path being the shortest. (d) Ellipsotope reachable set constructed based on the optimal path cost from (c). (e) Updated reachable set after pruning using ellipsotope-based bounds, which refines the feasible space for subsequent planning iterations.}
    \label{fig:overview}
\end{figure*}

We propose HZ-MP (informed Hybrid Zonotope-based Motion Planner), which integrates optimization-based and sampling-based planning through four key steps illustrated in Fig.~\ref{fig:overview}.

\emph{(a)→(b) Space decomposition:} The obstacle-free space is represented with hybrid zonotopes—finite collections of convex leaves with discrete transitions. The hybrid zonotope representation and the associated free-space decomposition are based on prior work~\cite{Bird2023,koeln2024zonolab}.

\emph{(b)→(c) Path identification (novel contribution):} We derive a linear feasibility system (Proposition~3.2) that characterizes leaf adjacency and shared-face intersections, guaranteeing collision-free transitions between convex regions. The algorithm samples on $(n-1)$-dimensional shared faces rather than $n$-dimensional volumes and performs parallel computation across all candidate paths simultaneously. In Fig.~\ref{fig:overview}(c), three paths of different colors represent possible routes, with the brown path identified as currently optimal.

\emph{(c)→(d) Ellipsotope construction (novel contribution):} Using the current optimal path cost, an ellipsotope reachable set is constructed to bound all states that could potentially yield improved solutions, as shown in Fig.~\ref{fig:overview}(d). We prove (Lemma~3.5) that this pruning step never discards the optimal path.

\emph{(d)→(e) Reachable set refinement:} Search regions are pruned using ellipsotope-based reachability bounds to update the feasible space, creating a refined reachable set for subsequent planning iterations, as demonstrated in Fig.~\ref{fig:overview}(e).

This iterative process enables HZ-MP to systematically reduce the search space while maintaining solution completeness, effectively balancing global optimality with computational efficiency.

By combining these techniques, HZ-MP achieves a favorable balance between the global solution quality of optimization-based methods and the adaptivity and speed of informed sampling. The proposed method is probabilistically complete and asymptotically optimal, bringing motion planning closer to achieving the solution quality of optimization-based methods while operating within the strict timing constraints of real-time applications.

The remainder of this paper is organized as follows. Section~\ref{sec-prelim} introduces the mathematical preliminaries on hybrid zonotopes and ellipsotopes. Section~\ref{sec-mainpart} presents the HZ-MP algorithm together with its theoretical guarantees. Section~\ref{sec-numericalEx} demonstrates the effectiveness of the algorithm through numerical examples. Section~\ref{sec-conclusion} concludes the paper with directions for future research.
\section{Notations and preliminaries}\label{sec-prelim}

This section introduces the notation and mathematical foundations required for the proposed hybrid zonotope-based motion planning algorithm.

\subsection{Notations}
Matrices are denoted by uppercase letters, e.g., $G\in\mathbb{R}^{n\times n_g}$, and sets by uppercase calligraphic letters, e.g., $\mathcal{Z}\subset\mathbb{R}^{n}$. Vectors and scalars are denoted by lowercase letters, e.g., $b\in\mathbb{R}^{n_c}$.
The $n$-dimensional unit hypercube is denoted by $\mathcal{B}_{\infty}^n=\left\{x\in\mathbb{R}^{n}~\middle|~\|x\|_{\infty}\leq1\right\}$. The set of all $n$-dimensional binary vectors is denoted by $\{-1,1\}^{n}$. The cardinality of a discrete set $\mathcal{T}$ is denoted by $|\mathcal{T}|$; for example, $|\mathcal{T}|=8$ for $\mathcal{T}=\{-1,1\}^{3}$.
The concatenation of two column vectors into a single column vector is denoted by $(\xi_1~\xi_2)=[\xi_1^T~\xi_2^T]^T$. The bold symbols $\mathbf{1}$ and $\mathbf{0}$ denote matrices of all ones and all zeros, respectively, and $\mathbf{I}$ denotes the identity matrix, with dimensions indicated by subscripts when not clear from context.
Given the sets $\mathcal{Z},\:\mathcal{W}\subset\mathbb{R}^{n}$ and $\mathcal{Y}\subset\mathbb{R}^{m}$, and a matrix $R\in\mathbb{R}^{m\times n}$, the linear mapping of $\mathcal{Z}$ by $R$ is $R\mathcal{Z}=\{Rz~|~z\in\mathcal{Z}\}$, the Minkowski sum of $\mathcal{Z}$ and $\mathcal{W}$ is $\mathcal{Z}\oplus\mathcal{W}=\{z+w~|~z\in\mathcal{Z},\:w\in\mathcal{W}\}$, the generalized intersection of $\mathcal{Z}$ and $\mathcal{Y}$ under $R$ is $\mathcal{Z}\cap_R\mathcal{Y}=\{z\in\mathcal{Z}~|~Rz\in\mathcal{Y}\}$, and the union of $\mathcal{Z}$ and $\mathcal{W}$ is $\mathcal{Z}\cup\mathcal{W}=\{x\in\mathbb{R}^{n}~|~x\in\mathcal{Z}\lor x\in\mathcal{W}\}$. The notation $\|.\|$ denotes the Euclidean norm.

\subsection{Hybrid zonotope}\label{sec-hybridZonotopes}

Hybrid zonotopes provide a powerful representation for nonconvex sets by combining continuous and binary factors. Each combination of binary factors defines a constrained zonotope, referred to as a leaf of the hybrid zonotope. Understanding the adjacency relationships between these leaves is essential for applications such as motion planning and reachability analysis. We begin by defining the hybrid zonotope set representation.

\begin{definition}[\cite{Bird2023}]\label{def-hybridZono}
The set $\mathcal{Z}_h\subset\mathbb{R}^n$ is a hybrid zonotope if there exist $G^c\in\mathbb{R}^{n\times n_{g}}$, $G^b\in\mathbb{R}^{n\times n_{b}}$, $c\in\mathbb{R}^{n}$, $A^c\in\mathbb{R}^{n_{c}\times n_{g}}$, $A^b\in\mathbb{R}^{n_{c}\times n_{b}}$, and $b\in\mathbb{R}^{n_c}$ such that
\begin{equation}\label{def-eqn-hybridZono}
    \mathcal{Z}_h = \left\{ \left[G^c \: G^b\right]\left[\begin{smallmatrix}\xi^c \\ \xi^b \end{smallmatrix}\right]  + c\: \middle| \begin{matrix} \left[\begin{smallmatrix}\xi^c \\ \xi^b \end{smallmatrix}\right]\in \mathcal{B}_\infty^{n_{g}} \times \{-1,1\}^{n_{b}}, \\ \left[A^c \: A^b\right]\left[\begin{smallmatrix}\xi^c \\ \xi^b \end{smallmatrix}\right] = b \end{matrix} \right\}.
\end{equation}
where $n_g$ is the number of continuous generators, $n_b$ is the number of binary factors, and $n_c$ is the number of linear equality constraints.
\end{definition}

 Central to our approach is the decomposition of complex obstacle-free environments into manageable convex regions, each represented as a leaf of a hybrid zonotope.
 \begin{example}
The map with obstacles shown in Figure~\ref{fig:hybrid_zonotope_map} is represented by a hybrid zonotope. Consider the blue region with a central obstacle (white area). This nonconvex feasible region is given by:
\begin{equation}
\mathcal{HZ} = \left\langle G^c, G^b, c, A^c, A^b, b \right\rangle
\end{equation}
where $c = [-41.25,\, 2.5]^T \in \mathbb{R}^2$ is the center,
$G^c \in \mathbb{R}^{2 \times 28}$ is the continuous generator matrix
($n_g = 28$ columns arising from the vertex structure of the convex
decomposition), and $G^b = \mathcal{O}_{2 \times 4}$, indicating that
all binary modes share the same center. The constraint matrices
$A^c \in \mathbb{R}^{16 \times 28}$,
$A^b \in \mathbb{R}^{16 \times 4}$, and $b \in \mathbb{R}^{16}$ encode
the logical constraints that exclude the obstacle region ($n_c = 16$
constraints arising from the boundary halfspaces of the obstacle).
% The specific parameter values are provided in Appendix~\ref{app:hz_params}. The binary variables effectively "switch off" certain regions to create the non-convex shape, demonstrating how hybrid zonotopes can efficiently represent complex obstacle-free spaces.
\end{example}

\begin{figure}[h]
    \centering
    \includegraphics[width=0.4\textwidth]{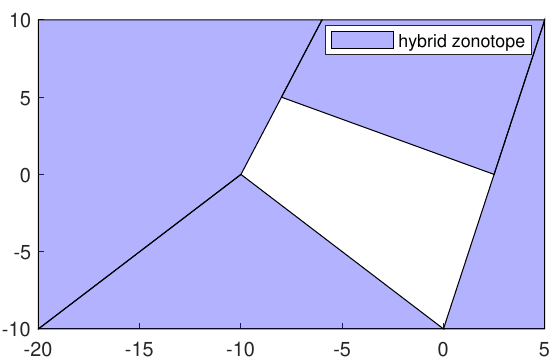}
    \caption{A hybrid zonotope representation of a non-convex feasible region with an obstacle (white area).}
    \label{fig:hybrid_zonotope_map}
\end{figure}
 
 % Figure \ref{fig:hybrid_zonotope_example} illustrates this concept with a simple example showing four distinct regions and their connectivity structure.
 
\subsection{Informed sampling}
The optimal path planning problem seeks to find a path $\sigma^*$ from a start state $x_{\text{start}}$ to a goal region $X_{\text{goal}}$ that minimizes a cost function $c(\sigma)$~\cite{Karaman2011}. Sampling-based planning algorithms are considered almost-surely asymptotically optimal if, as the number of samples approaches infinity, the cost of the solution converges to the optimal cost with probability one.

Informed sampling techniques improve the performance of these algorithms by concentrating the sampling process on regions that are more likely to contain high-quality solutions. Informed RRT*~\cite{gammell_informed_2014} exemplifies this approach by restricting sampling to an ellipsoidal subset of the state space once an initial solution has been found. This informed subset, defined as
\begin{equation}
X_{\text{informed}} = \{x \in X \mid \|x_{\text{start}} - x\| + \|x - x_{\text{goal}}\| \leq c_{\text{best}}\},
\end{equation}
contains all states that could potentially improve the current best solution with cost $c_{\text{best}}$.

A significant theoretical advantage of informed sampling is its convergence rate:

\begin{theorem}[\cite{gammell_informed_2014}]\label{thm:converge}
With uniform sampling of the informed subset, $x \sim \text{Uniform}(X_{\text{informed}})$, the cost of the best solution, $c_{\text{best}}$, converges linearly to the theoretical minimum, $c_{\text{min}}$, in the absence of obstacles.  
\end{theorem}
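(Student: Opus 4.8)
The plan is to reduce the statement to a one–step contraction on the optimality gap and then iterate. First I would cash in the hypothesis ``absence of obstacles'': the optimum is the straight segment, so $c_{\min}=\|x_{\text{start}}-x_{\text{goal}}\|$, and for any feasible cost $c\ge c_{\min}$ the informed set $\mathcal{E}(c):=\{x:\|x_{\text{start}}-x\|+\|x-x_{\text{goal}}\|\le c\}$ (so that $X_{\text{informed}}=\mathcal{E}(c_{\text{best}})$) is a prolate hyperspheroid with foci $x_{\text{start}},x_{\text{goal}}$, semi-transverse axis $c/2$, and $n-1$ equal semi-conjugate axes $\tfrac12\sqrt{c^2-c_{\min}^2}$; its Lebesgue measure is $\lambda_n(\mathcal{E}(c))=\zeta_n\,2^{-n}\,c\,(c^2-c_{\min}^2)^{(n-1)/2}$, with $\zeta_n$ the volume of the unit $n$-ball (I assume, as is standard in this setting, that the planning domain contains the relevant ellipsoids). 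Every $x\in\mathcal{E}(c)$ already satisfies $f(x):=\|x_{\text{start}}-x\|+\|x-x_{\text{goal}}\|\le c$, and in empty space the polyline $x_{\text{start}}\!\to\!x\!\to\!x_{\text{goal}}$ is collision-free, so an accepted sample drives the incumbent to (at most) $f(x)$; thus it suffices to control $\mathbb{E}[f(x)-c_{\min}]$ for $x\sim\mathrm{Uniform}(\mathcal{E}(c))$.

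Second, the core estimate. I would use the layer-cake formula $\mathbb{E}[f(x)-c_{\min}]=\int_{0}^{c-c_{\min}}\Pr[f(x)>c_{\min}+s]\,ds=\int_{0}^{c-c_{\min}}\bigl(1-\lambda_n(\mathcal{E}(c_{\min}+s))/\lambda_n(\mathcal{E}(c))\bigr)\,ds$, substitute $s=(c-c_{\min})t$, and bound the volume ratio from below. Writing $\Delta=c-c_{\min}$ and $2c_{\min}+\Delta=c+c_{\min}$, the ratio equals $\dfrac{c_{\min}+\Delta t}{c_{\min}+\Delta}\Bigl(\dfrac{t\,(2c_{\min}+\Delta t)}{2c_{\min}+\Delta}\Bigr)^{(n-1)/2}$, and the elementary inequalities $c_{\min}+\Delta t\ge t(c_{\min}+\Delta)$ and $2c_{\min}+\Delta t\ge t(2c_{\min}+\Delta)$ (both equivalent to $c_{\min}(1-t)\ge0$) give the ratio $\ge t\cdot t^{\,n-1}=t^{\,n}$. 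Hence $\mathbb{E}[f(x)-c_{\min}]\le\Delta\int_0^1(1-t^{\,n})\,dt=\tfrac{n}{n+1}\,\Delta$, i.e., with $\Delta_k:=c_{\text{best}}^{(k)}-c_{\min}$,
\[
\mathbb{E}\bigl[\Delta_{k+1}\,\big|\,\Delta_k\bigr]\ \le\ \frac{n}{n+1}\,\Delta_k ,
\]
a contraction whose rate depends only on the dimension and is uniform in the incumbent cost.

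Third, iterate: by the tower property $\mathbb{E}[\Delta_k]\le\bigl(\tfrac{n}{n+1}\bigr)^{k}\Delta_0\to 0$, which is precisely linear (geometric) convergence of $c_{\text{best}}$ to $c_{\min}$; since $(\Delta_k)$ is nonnegative and monotonically nonincreasing it converges almost surely, and monotone convergence then forces the a.s.\ limit to be $0$. I expect the main obstacle to be discharging the idealization in the first paragraph: in an actual RRT*-style executor a new sample does not instantaneously realize cost $f(x)$ but rather $\mathrm{cost}_{\mathrm{tree}}(x)+\|x-x_{\text{goal}}\|\ge f(x)$, so the clean recursion above is only an \emph{optimistic} per-step bound. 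To turn it into the claimed convergence one must invoke the standard obstacle-free RRT* fact that, with the usual shrinking connection radius and rewiring, $\mathrm{cost}_{\mathrm{tree}}(x)\to\|x_{\text{start}}-x\|$, so that after an a.s.\ finite burn-in the realized gap is dominated (up to an $o(1)$ term) by the contraction; everything else is the elementary volume computation. If the theorem is instead read purely as a statement about the sampling distribution — as the phrasing ``with uniform sampling of the informed subset'' suggests — this last coupling step is unnecessary and the contraction estimate is the entire proof.
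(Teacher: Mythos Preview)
The paper under review does not supply a proof of this theorem at all; it is quoted verbatim from \cite{gammell_informed_2014} as background on informed sampling, and no argument accompanies it. There is therefore nothing in the paper to compare your proposal against.

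That said, your argument is correct and is essentially the one given in the original reference. The volume formula for the prolate hyperspheroid, the layer-cake identity for $\mathbb{E}[f(x)-c_{\min}]$, and the elementary bound on the volume ratio yielding $\mathbb{E}[\Delta_{k+1}\mid\Delta_k]\le\tfrac{n}{n+1}\Delta_k$ are exactly the ingredients Gammell et al.\ use; your closing caveat about the discrepancy between $f(x)$ and the realized tree cost in an RRT* executor is also the right thing to flag, and it is likewise idealized away in the source. So your proposal reproduces the cited result rather than anything the present paper contributes.
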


This linear convergence rate offers a substantial improvement over the sublinear convergence of uniform sampling over the entire state space. Recent advances in bidirectional informed sampling~\cite{strub_ijrr22} further build on this foundation to achieve even more efficient path planning.

 \subsection{Ellipsoids and ellipsotopes}
 To represent the informed region described above, we employ ellipsotopes, which generalize ellipsoids and provide an efficient framework for capturing feasible subsets of the state space.

\begin{definition}[Ellipsoids and Ellipsotopes \cite{kousik2023}]\label{def:ellipsotope}
An ellipsoid is the set
\begin{align}\label{eq:ellipsoid}
    \mathcal{E}(c,Q) = \left\{x \in \mathbb{R}^n\ |\ (x-c)^\top Q(x-c) \leq 1 \right\},
\end{align}
where $c \in \mathbb{R}^n$ is the center and $Q \succ 0$ is a positive definite shape matrix. Generalizing the ellipsoid concept, an ellipsotope is a set
\begin{align}\begin{split}\label{eq:elli_defn}
    \mathcal{E}_p(c,G,A,b,\mathcal{I}) = \big\{c+ G\xi\ |\ &\|\xi_J\|_p \leq 1\ \forall\ J \in \mathcal{I}\\
        &\text{and}\ A\xi = b \big\} \subset \mathbb{R}^n,
\end{split}\end{align}
where $c \in \mathbb{R}^n$, $G \in \mathbb{R}^{n\times n_g}$, $A \in \mathbb{R}^{n_c\times n_g}$, $b \in \mathbb{R}^{n_c}$, and $\mathcal{I}$ is a valid index set.
\end{definition}

\section{Informed hybrid zonotope-based motion planning with space decomposition}\label{sec-mainpart}

\subsection{Problem Formulation and Overview}

Let $\mathcal{X} \subset \mathbb{R}^n$ be the state space, $\mathcal{X}_{\text{obs}} \subset \mathcal{X}$ the obstacle set, and $\mathcal{X}_{\text{free}} = \mathcal{X} \setminus \mathcal{X}_{\text{obs}}$ the obstacle-free space. The path cost is the arc length $c(\sigma) = \int_0^1 \|\dot{\sigma}(t)\| \,dt$.

The optimal motion planning problem is stated as follows: given a start state $x_{\text{start}} \in \mathcal{X}_{\text{free}}$ and a goal state $x_{\text{goal}} \in \mathcal{X}_{\text{free}}$, find a continuous collision-free path $\sigma: [0,1] \rightarrow \mathcal{X}_{\text{free}}$ that minimizes the path cost:
\begin{equation}
\sigma^* = \arg\min_{\sigma} c(\sigma), \quad \text{s.t.} \quad \sigma(0) = x_{\text{start}}, \; \sigma(1) = x_{\text{goal}}
\end{equation}

The HZ-MP algorithm addresses this problem through the systematic process illustrated in Figure~\ref{fig:overview}, transforming the nonconvex planning problem into a structured exploration of convex regions to enable efficient path optimization while maintaining completeness guarantees. The remainder of this section details each component of this process.

\subsection{Free space decomposition via hybrid zonotopes}\label{freespacedecom}

The first step of the approach, illustrated in Figure~\ref{fig:overview}(a)-(b), transforms the obstacle-laden environment into a structured representation amenable to efficient exploration. Given $\mathcal{X}_{\text{free}}$ as defined above, we decompose it into a hybrid zonotope representation through two stages.

\textit{Nef polyhedron representation.} We employ CGAL's Nef polyhedron framework to compute $\mathcal{X}_{\text{free}} = \mathcal{X} \setminus \mathcal{X}_{\text{obs}}$ through exact Boolean operations. This approach guarantees topologically consistent results even for complex non-manifold geometries with holes or narrow passages, while preserving the face-edge incidence relationships essential for adjacency computation~\cite{hachenberger2007boolean}.

\textit{Convex decomposition.} The resulting Nef polyhedron is partitioned into convex components $\{C_1, C_2, \ldots, C_m\}$ such that $\mathcal{X}_{\text{free}} = \bigcup_{i=1}^m C_i$, yielding a finite collection of convex regions suitable for hybrid zonotope construction~\cite{cgal:eb-24b}.

The resulting decomposition is represented by a vertex matrix $V=[\mathbf{v}_1,\dots,\mathbf{v}_{n_v}]\in\mathbb{R}^{n\times n_v}$ containing all vertices of the convex components, together with an incidence matrix $M\in\mathbb{R}^{n_v \times n_F}$~\cite{robbins2024mixed}. From this V-representation, a hybrid zonotope is constructed following the methodology of~\cite{koeln2024zonolab}, which provides efficient conversion from vertex-representation polytopes to a unified hybrid zonotope. This transformation preserves the topology of the free space while yielding a representation amenable to sampling-based motion planning.

The hybrid zonotope representation provides a finite collection of convex regions $\mathcal{L} = \{\mathcal{Z}_1, \ldots, \mathcal{Z}_m\}$, transforming the continuous planning problem into a discrete graph search augmented with continuous optimization within each region. To leverage this structure, we next establish connectivity relationships between regions.

\subsection{Adjacency computation and path identification}

Having decomposed the free space into convex leaves as shown in Figure~\ref{fig:overview}(b), we next establish connectivity relationships to identify feasible paths. This adjacency information, visualized in Figure~\ref{fig:overview}(c), forms the foundation for the dimension-reduced sampling strategy.

In the hybrid zonotope representation of obstacle-free environments, the nonconvex space decomposes into simpler convex regions termed leaves, each corresponding to a specific configuration of binary variables. The key observation for path planning is that adjacent regions share boundaries across which paths can transition smoothly. This adjacency structure yields a structured roadmap for navigation, enabling focused sampling on region boundaries rather than uniform volumetric exploration, while guaranteeing collision-free paths within each convex region.

\begin{remark}\label{prop-adjacency}
Given a hybrid zonotope $\mathcal{Z}_h$ as in Definition~\ref{def-hybridZono}, it can be decomposed into a collection of constrained zonotopes whose connectivity is established as follows.

\begin{enumerate}[(i)]
    \item Based on~\cite[Theorem~5]{Bird2023}, let $\xi_i^b$ be an element of the discrete set $\{-1,1\}^{n_b}$, which contains $2^{n_b}$ elements. The set of feasible binary configurations $\mathcal{L}$ is defined as:
    \begin{equation}
    \mathcal{L} = \{\xi_i^b \in \{-1,1\}^{n_b} \mid \exists \xi^c \in \mathcal{B}_\infty^{n_g} \text{ s.t. } A^c\xi^c + A^b\xi_i^b = b\}
    \end{equation}

    \item For each $\xi_i^b \in \mathcal{L}$, the corresponding constrained zonotope is:
    \begin{equation}
    \mathcal{Z}_{c,i} = \langle G^c, c + G^b\xi_i^b, A^c, b - A^b\xi_i^b \rangle \label{leaf_exp}
    \end{equation}

    \item An indexing function $\text{ID}: \mathcal{L} \rightarrow \{1,2,\ldots,|\mathcal{L}|\}$ assigns a unique integer to each leaf, enabling the adjacency matrix $\mathcal{A} \in \{0,1\}^{|\mathcal{L}| \times |\mathcal{L}|}$ to be defined as:
    \begin{equation}
    \mathcal{A}_{ij} =
    \begin{cases}
    1 & \text{if } \mathcal{Z}_{c,i} \cap \mathcal{Z}_{c,j} \neq \emptyset \\
    0 & \text{otherwise}
    \end{cases}
    \end{equation}
    where $i = \text{ID}(\xi^b_i)$ and $j = \text{ID}(\xi^b_j)$ for $\xi^b_i, \xi^b_j \in \mathcal{L}$.
\end{enumerate}

This decomposition and the resulting adjacency structure provide a complete topological representation of the hybrid zonotope that can be systematically explored for motion planning.
\end{remark}

\begin{proposition}\label{prop:leaf-intersection}
Let $\mathcal{Z}_h=\langle G^{c},G^{b},c,A^{c},A^{b},b\rangle$
be a hybrid zonotope with leaves $\mathcal{Z}_{c,i}$ and $\mathcal{Z}_{c,j}$ as given in~(\ref{leaf_exp}), and define
\begin{equation}
\begin{aligned}
M&=\begin{bmatrix}G^{c}\\A^{c}\end{bmatrix},\;
N=\begin{bmatrix}G^{b}\\A^{b}\end{bmatrix},\;\xi^c = \xi^c_i,\;
\\
\Delta\xi^c &= \xi^c_j - \xi^c_i,\;
\Delta\xi^{b}=\xi^{b}_{i}-\xi^{b}_{j},\;
r_{i}=b-A^{b}\xi^{b}_{i}.
\end{aligned}
\end{equation}
\textit{(a) Intersection criterion.}
$\mathcal{Z}_{c,i} \cap \mathcal{Z}_{c,j} \neq \emptyset$ if and only if the following system is feasible:
\begin{subequations}\label{eq:leaf-system}
\begin{align}
M\Delta\xi^c &= N\Delta\xi^b \label{eq:leaf-system:diff}\\
A^c\xi^c &= r_i \label{eq:leaf-system:ac}\\
-\mathbf{1} \leq \xi^c &\leq \mathbf{1} \label{eq:leaf-system:box1}\\
-\mathbf{1} \leq \xi^c + \Delta\xi^c &\leq \mathbf{1} \label{eq:leaf-system:box2}
\end{align}
\end{subequations}
\textit{(b) Linear consistency shortcut.}
Let $L = (I-MM^\dagger)N$ where $M^\dagger$ is the Moore–Penrose pseudoinverse of $M$. Then
\begin{align}
\exists \Delta\xi^c: M\Delta\xi^c = N\Delta\xi^b \iff L\Delta\xi^b = 0
\end{align}
\textit{(c) Tangent vs. overlap characterization.}
Let $\mathcal{F}$ denote the set of all solutions to \eqref{eq:leaf-system}, and define the strict interior feasible region:
\begin{align}
\mathcal{F}^\circ = \{(\xi^c, \Delta\xi^c) \in \mathcal{F} \mid |\xi^c_k| < 1, |\xi^c_k + \Delta\xi^c_k| < 1 ; \forall k\}
\end{align}
Then:
\begin{align}
\text{Tangent contact (boundary-only)} &\iff \mathcal{F} \neq \emptyset \text{ and } \mathcal{F}^\circ = \emptyset \\
\text{Interior overlap} &\iff \mathcal{F}^\circ \neq \emptyset
\end{align}
Interior overlap implies the intersection contains points in the interior of both leaves, yielding non-zero volume. Tangent contact implies the intersection is confined to boundaries, forming a lower-dimensional face.\\
\textit{(d) Adjacency computation.}
For any pair of leaves, the adjacency can be determined by:
\begin{align}
\mathcal{A}_{ij} =
\begin{cases}
1 & \text{if } L(\xi^b_i - \xi^b_j) = 0 \text{ and system \eqref{eq:leaf-system} is feasible} \\
0 & \text{otherwise}
\end{cases}
\end{align}
\end{proposition}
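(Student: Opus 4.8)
The plan is to prove the four parts in the order stated, since each rests on the previous one. For \textbf{part (a)} I would simply unfold membership in each leaf from \eqref{leaf_exp}: a point $x$ lies in $\mathcal{Z}_{c,i}$ iff there is an $\alpha$ with $\|\alpha\|_\infty\le 1$, $A^c\alpha=b-A^b\xi^b_i=r_i$, and $x=G^c\alpha+c+G^b\xi^b_i$, and likewise in $\mathcal{Z}_{c,j}$ with some $\beta$. Putting $\xi^c:=\alpha$, $\Delta\xi^c:=\beta-\alpha$ and equating the two expressions for $x$, the $G$-rows give $G^c\Delta\xi^c=-G^b\Delta\xi^b$, while subtracting the two affine constraints gives $A^c\Delta\xi^c=-A^b\Delta\xi^b$; stacked, these are \eqref{eq:leaf-system:diff} (up to the overall sign of $\Delta\xi^b$, which does not affect any conclusion below), $A^c\alpha=r_i$ is \eqref{eq:leaf-system:ac}, and the two box constraints are \eqref{eq:leaf-system:box1}–\eqref{eq:leaf-system:box2}. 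Conversely, given any solution of \eqref{eq:leaf-system}, one checks directly that $x:=G^c\xi^c+c+G^b\xi^b_i$ satisfies the leaf-$i$ form with $\alpha=\xi^c$ and the leaf-$j$ form with $\beta=\xi^c+\Delta\xi^c$. This proves the intersection criterion and, as a by-product, the image identity $\mathcal{Z}_{c,i}\cap\mathcal{Z}_{c,j}=\{\,G^c\xi^c+c+G^b\xi^b_i \mid (\xi^c,\Delta\xi^c)\ \text{solves}\ \eqref{eq:leaf-system}\,\}$, which I will reuse for part (c).

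\textbf{Part (b)} is the standard fact that $M\Delta\xi^c=y$ is solvable iff $y\in\operatorname{range}(M)$ iff $(I-MM^\dagger)y=0$, since $MM^\dagger$ is the orthogonal projector onto $\operatorname{range}(M)$; taking $y=N\Delta\xi^b$ (linearity makes the sign irrelevant) gives $L\Delta\xi^b=0$ as claimed. \textbf{Part (d)} then follows at once: by (a), $\mathcal{A}_{ij}=1$ iff \eqref{eq:leaf-system} is feasible, and by (b) feasibility of \eqref{eq:leaf-system} already forces \eqref{eq:leaf-system:diff} to be solvable, hence $L\Delta\xi^b=0$; so conjoining "$L(\xi^b_j-\xi^b_i)=0$" is logically redundant but serves as a cheap necessary pre-test — reject the pair whenever $L(\xi^b_j-\xi^b_i)\neq 0$, and only otherwise solve the small feasibility LP \eqref{eq:leaf-system}.

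For \textbf{part (c)} I would work with the feasible polytope $\mathcal{F}$ of \eqref{eq:leaf-system} and the affine projection $\pi(\xi^c,\Delta\xi^c)=G^c\xi^c+c+G^b\xi^b_i$. Part (a) gives $\pi(\mathcal{F})=\mathcal{Z}_{c,i}\cap\mathcal{Z}_{c,j}$, and the same computation shows $\pi(\mathcal{F}^\circ)$ is exactly the set of intersection points admitting simultaneously box-strict representations $\|\alpha\|_\infty<1$ in leaf $i$ and $\|\beta\|_\infty<1$ in leaf $j$. If $\mathcal{F}^\circ\neq\emptyset$, pick a point there; the equality constraints of \eqref{eq:leaf-system} cut out an affine subspace on which all the box inequalities are strict at that point, so a whole relatively open neighborhood lies in $\mathcal{F}^\circ$, and pushing it through $\pi$ shows $\mathcal{Z}_{c,i}\cap\mathcal{Z}_{c,j}$ contains a relatively open subset of $\operatorname{aff}(\mathcal{Z}_{c,i})$ — i.e.\ it is full-dimensional relative to the leaves, hence of positive Lebesgue measure when the leaves are $n$-dimensional. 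Conversely, if $\mathcal{F}\neq\emptyset$ but $\mathcal{F}^\circ=\emptyset$, then some box functional ($\xi^c_k=\pm 1$ or $\xi^c_k+\Delta\xi^c_k=\pm 1$) is tight on all of $\mathcal{F}$, so $\pi(\mathcal{F})$ lies inside the corresponding proper face of $\mathcal{Z}_{c,i}$ or $\mathcal{Z}_{c,j}$ and is therefore lower-dimensional.

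The main obstacle is precisely this last part: a point in the relative interior of a constrained zonotope need not have a box-strict preimage if the leaf has redundant or degenerate generators (and conversely), so the clean dichotomy requires interpreting "non-zero volume" and "lower-dimensional face" relative to the leaves — equivalently, assuming the V-representation leaves produced by the decomposition of Section~\ref{freespacedecom} carry no redundant generators. Granting that, the relative-interior argument above goes through verbatim; the remaining parts (a), (b), (d) are routine linear algebra and require no such care.
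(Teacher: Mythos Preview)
Your proposal is correct and follows essentially the same route as the paper's own proof: unfold leaf membership and subtract for (a), invoke the range-projector property of $MM^\dagger$ for (b), and assemble (d) from (a)--(b). Your treatment of (c) is in fact more careful than the paper's---which simply asserts the interior/boundary dichotomy without addressing the degenerate-generator caveat you flag---and your remark that the $L\Delta\xi^b=0$ clause in (d) is a logically redundant but computationally useful pre-test is also correct and not made explicit in the paper.
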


\begin{proof}
\textit{(a)} ($\Rightarrow$) Let $z \in \mathcal{Z}_{c,i} \cap \mathcal{Z}_{c,j}$. Then there exist $\xi^c_i, \xi^c_j \in \mathcal{B}_\infty^{n_g} = \{\xi \in \mathbb{R}^{n_g} \mid \|\xi\|_\infty \leq 1\}$ such that:
\begin{align}
G^c\xi^c_i + c + G^b\xi^b_i &= z \\
G^c\xi^c_j + c + G^b\xi^b_j &= z \\
A^c\xi^c_i + A^b\xi^b_i &= b \\
A^c\xi^c_j + A^b\xi^b_j &= b
\end{align}
Setting $\Delta\xi^c = \xi^c_j - \xi^c_i$ and subtracting yields:
\begin{align}
G^c\Delta\xi^c &= G^b(\xi^b_i - \xi^b_j) = G^b\Delta\xi^b\\
A^c\Delta\xi^c &= A^b(\xi^b_i - \xi^b_j) = A^b\Delta\xi^b
\end{align}
Combined, this gives $M\Delta\xi^c = N\Delta\xi^b$.
Let $\xi^c = \xi^c_i$. Then we have:
\begin{align}
A^c\xi^c &= A^c\xi^c_i = b - A^b\xi^b_i = r_i\\
\|\xi^c\|_\infty &= \|\xi^c_i\|_\infty \leq 1\\
\|\xi^c + \Delta\xi^c\|_\infty &= \|\xi^c_i + (\xi^c_j - \xi^c_i)\|_\infty = \|\xi^c_j\|_\infty \leq 1
\end{align}
Thus, $(\xi^c, \Delta\xi^c)$ satisfies system \eqref{eq:leaf-system}.

($\Leftarrow$) Given $\xi^c$ and $\Delta\xi^c$ satisfying \eqref{eq:leaf-system}, define:
\begin{align}
\xi^c_i &= \xi^c\\
\xi^c_j &= \xi^c + \Delta\xi^c
\end{align}
From \eqref{eq:leaf-system:box1} and \eqref{eq:leaf-system:box2}, we have $\xi^c_i, \xi^c_j \in \mathcal{B}_\infty^{n_g}$. From \eqref{eq:leaf-system:ac}, $A^c\xi^c_i = r_i = b - A^b\xi^b_i$, thus $A^c\xi^c_i + A^b\xi^b_i = b$. From \eqref{eq:leaf-system:diff}, $M\Delta\xi^c = N\Delta\xi^b$ implies:
\begin{align}
A^c(\xi^c_j - \xi^c_i) &= A^b(\xi^b_i - \xi^b_j)\\
\Rightarrow A^c\xi^c_j + A^b\xi^b_j &= A^c\xi^c_i + A^b\xi^b_i = b
\end{align}
Define $z = G^c\xi^c_i + c + G^b\xi^b_i$. Similarly, from \eqref{eq:leaf-system:diff}:
\begin{align}
G^c(\xi^c_j - \xi^c_i) &= G^b(\xi^b_i - \xi^b_j)\\
\Rightarrow G^c\xi^c_j + G^b\xi^b_j &= G^c\xi^c_i + G^b\xi^b_i
\end{align}
Thus, $z = G^c\xi^c_j + c + G^b\xi^b_j \in \mathcal{Z}_{c,i} \cap \mathcal{Z}_{c,j}$.

\textit{(b)} $N\Delta\xi^b$ lies in $\operatorname{im}M$ if and only if its projection onto $\operatorname{im}M^\perp$ is zero; this projection is precisely $(I-MM^\dagger)N\Delta\xi^b = L\Delta\xi^b$.\\

\textit{(c)} $\mathcal{F}^\circ \neq \emptyset$ means there exists a solution strictly inside both boxes, implying the intersection contains interior points with non-zero volume. When all feasible solutions touch at least one box face ($|\cdot| = 1$), the intersection consists only of boundary points or lower-dimensional faces, resulting in tangent contact.\\

\textit{(d)} Adjacency requires both linear consistency ($L\Delta\xi^b = 0$) and feasibility of system \eqref{eq:leaf-system}, which together ensure a non-empty intersection between leaves.
\end{proof}

%------------------------------------------------------------

\begin{corollary}\label{cor-boundary-contact}
To distinguish between interior overlap and boundary contact of leaves $\mathcal{Z}_{c,i}$ and $\mathcal{Z}_{c,j}$, introduce a slack variable $\delta \geq 0$ that tightens the box constraints in \eqref{eq:leaf-system}:
\begin{subequations}\label{eq:leaf-slack}
\begin{align}
\eqref{eq:leaf-system:diff}\\
\eqref{eq:leaf-system:ac}\\
-1+\delta \leq \xi^c \leq 1-\delta \label{eq:leaf-slack:box1}\\
-1+\delta \leq \xi^c + \Delta\xi^c \leq 1-\delta \label{eq:leaf-slack:box2}
\end{align}
\end{subequations}
Consider the linear program
\begin{align}
\delta^\star = \max_{\xi^c, \Delta\xi^c, \delta} \delta \quad \text{s.t. }~\eqref{eq:leaf-slack}
\end{align}
Then:
\begin{itemize}
\item $\delta^\star > 0 \iff$ interior overlap (intersection contains interior points)
\item $\delta^\star = 0 \iff$ boundary contact (intersection confined to boundaries)
\end{itemize}
Intuitively, $\delta^\star$ quantifies the maximal uniform contraction of the box constraints that preserves feasibility.
\end{corollary}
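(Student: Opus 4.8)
The plan is to reduce the corollary to the tangent-versus-overlap dichotomy already established in Proposition~\ref{prop:leaf-intersection}(c): interior overlap is characterized there by $\mathcal{F}^\circ \neq \emptyset$, and boundary (tangent) contact by $\mathcal{F} \neq \emptyset$ together with $\mathcal{F}^\circ = \emptyset$. Since the corollary is invoked only for leaves that already intersect, Proposition~\ref{prop:leaf-intersection}(a) guarantees $\mathcal{F} \neq \emptyset$; in particular, $\delta = 0$ paired with any $(\xi^c,\Delta\xi^c) \in \mathcal{F}$ is feasible for \eqref{eq:leaf-slack}, so the linear program is feasible. Its feasible set is closed and bounded ($\delta \in [0,1]$, $\xi^c \in [-1,1]^{n_g}$, and $\Delta\xi^c \in [-2,2]^{n_g}$ by the two box constraints), hence compact, so the optimum $\delta^\star$ is attained and finite. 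This is a short preliminary observation rather than the crux, but it matters because the conclusion ``$\delta^\star = 0$'' should refer to an attained optimum rather than merely an infimum.

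The heart of the argument is the equivalence $\delta^\star > 0 \iff \mathcal{F}^\circ \neq \emptyset$, proved by two direct implications. For the forward direction, a maximizer $(\xi^c,\Delta\xi^c,\delta^\star)$ with $\delta^\star > 0$ satisfies $|\xi^c_k| \le 1-\delta^\star < 1$ and $|\xi^c_k + \Delta\xi^c_k| \le 1-\delta^\star < 1$ for every $k$, while still obeying the equalities \eqref{eq:leaf-system:diff} and \eqref{eq:leaf-system:ac}, which are untouched by $\delta$; hence $(\xi^c,\Delta\xi^c) \in \mathcal{F}^\circ$. For the converse, given $(\xi^c,\Delta\xi^c) \in \mathcal{F}^\circ$, set
\[
\delta_0 \;=\; \min_{1 \le k \le n_g}\min\bigl\{\, 1-|\xi^c_k|,\; 1-|\xi^c_k+\Delta\xi^c_k| \,\bigr\},
\]
which is strictly positive since each term is positive and the minimum is over finitely many indices; then $(\xi^c,\Delta\xi^c,\delta_0)$ is feasible for \eqref{eq:leaf-slack}, so $\delta^\star \ge \delta_0 > 0$.

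Combining this equivalence with feasibility of the linear program yields the two bullet points. Because $\delta \ge 0$ in the program, $\delta^\star \ge 0$ always, so exactly one of $\delta^\star > 0$ and $\delta^\star = 0$ holds. When $\mathcal{F} \neq \emptyset$, Proposition~\ref{prop:leaf-intersection}(c) gives exactly one of $\mathcal{F}^\circ \neq \emptyset$ (interior overlap) or $\mathcal{F}^\circ = \emptyset$ (tangent contact); by the equivalence above these correspond to $\delta^\star > 0$ and $\delta^\star = 0$ respectively. The closing ``intuition'' sentence requires no proof: constraints \eqref{eq:leaf-slack:box1} and \eqref{eq:leaf-slack:box2} describe the box $[-1+\delta,\,1-\delta]^{n_g}$ obtained by contracting $[-1,1]^{n_g}$ uniformly by $\delta$ on each face, so $\delta^\star$ is by construction the largest such contraction preserving feasibility. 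I do not expect a genuine obstacle here; the only points needing care are the attainment of $\delta^\star$ noted above and the observation that the $\delta$-dependence sits only in the box constraints, so tightening clips but never removes the affine part of $\mathcal{F}$ cut out by \eqref{eq:leaf-system:diff} and \eqref{eq:leaf-system:ac}.
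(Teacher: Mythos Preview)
The paper states this corollary without a separate proof, treating it as an immediate consequence of Proposition~\ref{prop:leaf-intersection}(c). Your argument correctly supplies the omitted details: you reduce to the tangent--versus--overlap dichotomy of Proposition~\ref{prop:leaf-intersection}(c) via the equivalence $\delta^\star > 0 \iff \mathcal{F}^\circ \neq \emptyset$, and you handle attainment of $\delta^\star$ by compactness. This is exactly the intended route, just made explicit.
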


\begin{corollary}\label{cor-sharedFaces}
For any two adjacent leaf nodes $\mathcal{Z}_{c,i}$ and $\mathcal{Z}_{c,j}$ with $\mathcal{A}_{ij} = 1$, their shared face can be computed as a generalized intersection of constrained zonotopes~\cite{scott_constrained_2016}. Given $\mathcal{Z}_{c,i} = \langle G^c, c + G^b\xi^b_i, A^c, b - A^b\xi^b_i \rangle$ and $\mathcal{Z}_{c,j} = \langle G^c, c + G^b\xi^b_j, A^c, b - A^b\xi^b_j \rangle$, their shared face is:
\begin{equation}
\begin{aligned}
\mathcal{F}_{ij} &= \mathcal{Z}_{c,i} \cap_{\mathbf{I}} \mathcal{Z}_{c,j} = \left\{\left[G^c \: \mathbf{0}\right], c + G^b\xi^b_i, \right. \\
&\quad \left. \left[\begin{array}{cc}
A^c & \mathbf{0} \\
\mathbf{0} & A^c \\
G^c & -G^c
\end{array}\right], \left[\begin{array}{c}
b - A^b\xi^b_i \\
b - A^b\xi^b_j \\
G^b(\xi^b_i - \xi^b_j)
\end{array}\right]\right\}
\end{aligned}
\end{equation}
where $\mathbf{I}$ is the identity matrix.

Furthermore, the shared face $\mathcal{F}_{ij}$ is a constrained zonotope of dimension at most $(n-1)$. This follows from the fact that the constraint $G^c\xi = G^b(\xi^b_i - \xi^b_j)$ imposes at least one linear constraint on $\xi$, reducing the dimension by at least 1. Since $\xi^b_i \neq \xi^b_j$ for adjacent leaves (they are distinct), the constraint is non-trivial and $\mathcal{F}_{ij}$ has positive $(n-1)$-dimensional measure.
\end{corollary}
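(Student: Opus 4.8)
The plan is to separate the two assertions of the corollary — the explicit constrained-zonotope description of $\mathcal F_{ij}$, and the dimension/measure statement — and to prove the first by specializing a standard construction and the second by a geometric argument about the decomposition.

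For the formula, I would invoke the closed form of the generalized intersection of constrained zonotopes from \cite{scott_constrained_2016}: for $\mathcal Z_1=\langle G_1,c_1,A_1,b_1\rangle$, $\mathcal Z_2=\langle G_2,c_2,A_2,b_2\rangle$ and a matrix $R$, one has $\mathcal Z_1\cap_R\mathcal Z_2=\big\langle[G_1\;\mathbf 0],\,c_1,\,\left[\begin{smallmatrix}A_1&\mathbf 0\\\mathbf 0&A_2\\RG_1&-G_2\end{smallmatrix}\right],\,\left[\begin{smallmatrix}b_1\\b_2\\c_2-Rc_1\end{smallmatrix}\right]\big\rangle$. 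Taking $R=\mathbf I$ and substituting the leaf data $G_1=G_2=G^c$, $A_1=A_2=A^c$, $c_1=c+G^b\xi^b_i$, $c_2=c+G^b\xi^b_j$, $b_1=b-A^b\xi^b_i$, $b_2=b-A^b\xi^b_j$ reproduces the displayed matrices verbatim, since the bottom constraint block becomes $G^c\xi_1-G^c\xi_2=(c+G^b\xi^b_j)-(c+G^b\xi^b_i)=G^b(\xi^b_j-\xi^b_i)$. A one-line observation that $\cap_{\mathbf I}$ is the ordinary set intersection identifies this set with $\mathcal Z_{c,i}\cap\mathcal Z_{c,j}$; since $\mathcal A_{ij}=1$, Proposition~\ref{prop:leaf-intersection}(a) certifies it is non-empty. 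This part is entirely routine.

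For $\dim\mathcal F_{ij}\le n-1$, I would argue from the decomposition rather than from counting equality constraints on $\xi$ — the latter (as written in the inline justification) is not by itself conclusive, because the output map $\xi\mapsto G^c\xi_1+c+G^b\xi^b_i$ need not be injective on the feasible $\xi$-set. The leaves $\mathcal Z_{c,1},\dots,\mathcal Z_{c,m}$ are the convex cells of the free-space decomposition of Section~\ref{freespacedecom} and hence have pairwise disjoint interiors; equivalently, for adjacent $i\neq j$ we are in the tangent-contact regime of Proposition~\ref{prop:leaf-intersection}(c) / Corollary~\ref{cor-boundary-contact} ($\delta^\star=0$). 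Therefore $\mathcal F_{ij}\subseteq\partial\mathcal Z_{c,i}$, so $\mathcal F_{ij}$ is a compact convex set with empty interior in $\mathbb R^n$, forcing its affine hull to have dimension at most $n-1$; it is a constrained zonotope because constrained zonotopes are closed under intersection, as the formula of the first part already shows.

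The positive $(n-1)$-dimensional Hausdorff measure is the step I expect to be the real obstacle, since it fails for a generic pair of adjacent full-dimensional cells (for instance, two cubes in $\mathbb R^3$ meeting only along an edge). Bare non-emptiness of $\mathcal Z_{c,i}\cap\mathcal Z_{c,j}$ is not enough, so I would strengthen the hypothesis by using that the CGAL convex decomposition of Section~\ref{freespacedecom} is face-to-face: the incidence matrix $M$ then records $\mathcal Z_{c,i}$ and $\mathcal Z_{c,j}$ as adjacent precisely when they share a common facet, i.e.\ a polytope of dimension exactly $n-1$, which has non-empty relative interior and hence positive $(n-1)$-Hausdorff measure; since $\mathcal F_{ij}$ equals that facet, it inherits this measure. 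The role of $\xi^b_i\neq\xi^b_j$ is then to guarantee that the shift $G^b(\xi^b_j-\xi^b_i)$ is non-trivial, so that the two leaves are genuinely distinct cells rather than the same one. I would make this face-to-face (equivalently, facet-adjacency) assumption explicit, and note that without it the clean ``exactly a facet'' conclusion must be relaxed to ``a face of dimension at most $n-1$''.
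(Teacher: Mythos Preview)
For the constrained-zonotope formula your derivation coincides with the paper's: both simply specialize the generalized-intersection identity of \cite{scott_constrained_2016} to $R=\mathbf I$ and the given leaf data, and the paper adds nothing beyond the citation.

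For the dimension and measure claims the paper gives no proof beyond the inline sentence already contained in the corollary --- that the added equality $G^c\xi=G^b(\xi^b_j-\xi^b_i)$ removes at least one degree of freedom from $\xi$, and that $\xi^b_i\neq\xi^b_j$ renders it ``non-trivial,'' hence positive $(n-1)$-measure. You take a different and sounder route. For $\dim\mathcal F_{ij}\le n-1$ you invoke the tangent-contact characterization (Proposition~\ref{prop:leaf-intersection}(c), Corollary~\ref{cor-boundary-contact}) to conclude $\mathcal F_{ij}\subseteq\partial\mathcal Z_{c,i}$, rightly noting that constraint-counting on $\xi$ does not by itself bound the dimension of the \emph{image} under a possibly non-injective $G^c$. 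For positive $(n-1)$-measure you correctly flag that a non-empty intersection of distinct full-dimensional convex cells does not suffice (two cubes in $\mathbb R^3$ meeting along an edge), and you supply the missing face-to-face hypothesis on the decomposition --- a property the construction in Section~\ref{freespacedecom} plausibly delivers but which the corollary's inline justification never invokes. Your version closes the gaps; the paper's argument, read literally, does not.
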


This computation of shared faces is based on the generalized intersection operation for constrained zonotopes established in~\cite{scott_constrained_2016}. The dimension-reduction property is central to the proposed sampling-based approach: by focusing samples on these $(n-1)$-dimensional shared faces rather than $n$-dimensional volumes, the dimensionality of the search space is effectively reduced while connectivity between adjacent regions of the free space is preserved.
\begin{example}
Consider a hybrid zonotope $\mathcal{Z}_h$ in $\mathbb{R}^3$ that decomposes a workspace into four distinct leaf nodes representing convex regions of the free space.
Figure~\ref{fig:hybrid_zonotope_example} shows the geometric representation of these leaf nodes.

\begin{figure}[t]
\centering
\includegraphics[width=0.7\linewidth]{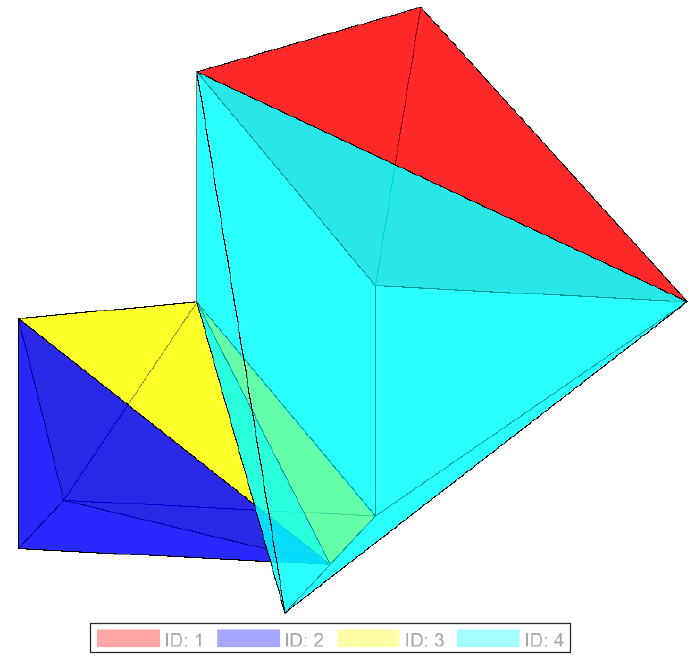}
\caption{Hybrid zonotope with four leaf nodes (1-4) and their adjacency.}
\label{fig:hybrid_zonotope_example}
\end{figure}

The adjacency matrix $\mathcal{A}$ for this configuration is:
\begin{equation}
\mathcal{A} = \begin{array}{c|cccc}
\text{ID} & 1 & 2 & 3 & 4 \\
\hline
1 & 0 & 0 & 0 & 1 \\
2 & 0 & 0 & 1 & 0 \\
3 & 0 & 1 & 0 & 1 \\
4 & 1 & 0 & 1 & 0
\end{array}
\end{equation}

This adjacency matrix indicates that leaf node 1 is adjacent to leaf node 4, leaf node 4 is adjacent to leaf node 3, and leaf node 2 is adjacent to leaf node 3, the only possible route is $1 \rightarrow 4 \rightarrow 3 \rightarrow 2$. 
For any pair of adjacent leaf nodes, we can compute their shared face using Corollary \ref{cor-sharedFaces}. For example, the shared face between leaf nodes 1 and 4 is obtained by:
\begin{equation}
\mathcal{F}_{1,4} = \mathcal{Z}_{c,1} \cap_{\mathbf{I}} \mathcal{Z}_{c,4}
\end{equation}

Our algorithm generates samples on the shared faces $\mathcal{F}_{1,4}$, $\mathcal{F}_{4,3}$, and $\mathcal{F}_{3,2}$ to optimize the path through these waypoints, achieving dimension reduction by sampling on lower-dimensional interfaces rather than the full 3D space.
\end{example}

The adjacency matrix $\mathcal{A}$ and the shared-face computations of Corollary~\ref{cor-sharedFaces} provide the topological foundation for the planning algorithm. We now present how this structure enables efficient path optimization through informed sampling and iterative refinement.

\subsection{Informed sampling and ellipsotope refinement}

With the adjacency structure in place, we employ an iterative refinement process that alternates between path optimization and reachable-set updates, as depicted in Figure~\ref{fig:overview}(c)-(e). The following subsections describe the sampling strategy on shared faces and the ellipsotope-based pruning mechanism.

\subsubsection{Sampling on shared faces}

Rather than sampling uniformly throughout the $n$-dimensional state space, we exploit the structure of Proposition~\ref{prop:leaf-intersection} to focus samples on the $(n-1)$-dimensional shared faces $\mathcal{F}_{ij}$ between adjacent leaves. For a path $p = (i_0, i_1, \ldots, i_k)$ through the adjacency graph, waypoint samples $\mathbf{s} = (s_1, \ldots, s_{k-1})$ are generated with each $s_j \in \mathcal{F}_{i_j,i_{j+1}}$.

The path cost for a given set of waypoints is:
\begin{equation}
c(\sigma_{\mathbf{s}}) = \|x_{\text{start}} - s_1\| + \sum_{j=1}^{k-2} \|s_j - s_{j+1}\| + \|s_{k-1} - x_{\text{goal}}\|
\end{equation}

The convexity of each leaf guarantees that straight-line segments between consecutive waypoints remain collision-free, enabling efficient local optimization within this reduced-dimensional space.

\subsubsection{Ellipsotope-based reachable set refinement}

As illustrated in Figure~\ref{fig:overview}(d)-(e), each discovered path with cost $c_{\text{best}}$ induces an ellipsotope that bounds all potentially improving solutions:
\begin{equation}
\mathcal{E} = \{x \in \mathbb{R}^n : \|x - x_{\text{start}}\| + \|x - x_{\text{goal}}\| \leq c_{\text{best}}\}
\end{equation}

This ellipsotope, shown as the dashed ellipse in Figure~\ref{fig:overview}(d), enables systematic pruning of the search space. The intersection $\mathcal{E} \cap \mathcal{Z}_h$ yields the refined reachable set depicted in Figure~\ref{fig:overview}(e), focusing subsequent iterations on regions that can potentially improve the solution.

\begin{figure}[h]
    \centering
    \begin{subfigure}[t]{0.23\textwidth}
        \centering
        \includegraphics[width=\textwidth]{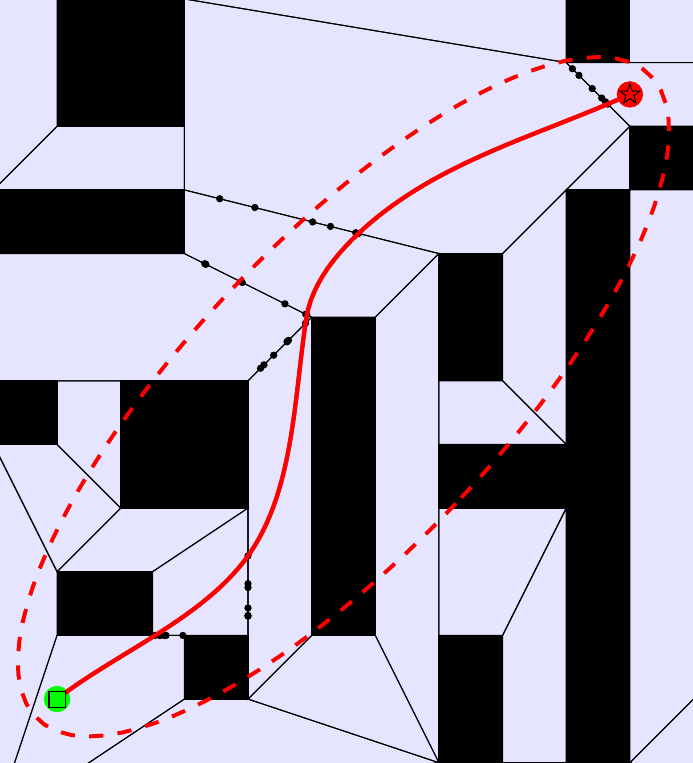}
        \caption{The green dot marks the start state, the red dot marks the goal, and the red curve is the current shortest path. The red dashed ellipse denotes the ellipsotope corresponding to the current best cost.}
        \label{fig:ellipsotope_intersection}
    \end{subfigure}
    \hfill
    \begin{subfigure}[t]{0.23\textwidth}
        \centering        
        \includegraphics[width=\textwidth]{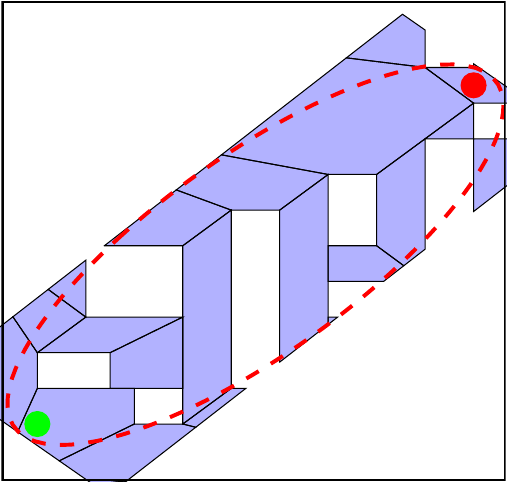}
        \caption{Intersection of the hybrid zonotope with the constrained zonotope derived from the ellipsotope, yielding the updated feasible space for the subsequent planning iteration.}
        \label{fig:updated_path}
    \end{subfigure}
    \caption{Illustration of ellipsotope-informed sampling with path cost.}
    \label{fig:path_optimization}
\end{figure}

Algorithm~\ref{alg:HZMP} implements the complete HZ-MP pipeline illustrated in Figure~\ref{fig:overview}. The procedure comprises three phases: decomposition and adjacency computation (lines 1--7, corresponding to Figure~\ref{fig:overview}(a)--(c)), parallel path optimization (lines 12--20), and ellipsotope-based refinement (lines 17--18, corresponding to Figure~\ref{fig:overview}(d)--(e)).

\begin{algorithm}[t]
\caption{Informed Hybrid Zonotope Motion Planning (HZ-MP)}
\label{alg:HZMP}
\begin{algorithmic}[1]
\Require: Start state $x_{\text{start}}$, goal state $x_{\text{goal}}$, state space $\mathcal{X}$, obstacles $\mathcal{X}_{\text{obs}}$, max iterations $N_{\text{max}}$
\Ensure: Optimal path solution $\sigma^*$
\State $\mathcal{Z}_h \gets \texttt{FreeSpaceDecomposition}(\mathcal{X}, \mathcal{X}_{\text{obs}})$ \Comment{\ref{freespacedecom}}
\State $\mathcal{L} \gets \texttt{LeafNodes}(\mathcal{Z}_h)$
\State $\mathcal{A} \gets \texttt{AdjacencyMatrix}(\mathcal{Z}_h, \mathcal{L})$ \Comment{\ref{prop:leaf-intersection}}
\State $i_{\text{start}} \gets \texttt{findContainingLeaf}(x_{\text{start}}, \mathcal{L})$
\State $i_{\text{goal}} \gets \texttt{findContainingLeaf}(x_{\text{goal}}, \mathcal{L})$
\State $\mathcal{P} \gets \texttt{findAllConnectedPaths}(i_{\text{start}}, i_{\text{goal}}, \mathcal{A})$
\State $\mathcal{F} \gets \texttt{computeSharedFaces}(\mathcal{L}, \mathcal{A})$ \Comment{Corollary~\ref{cor-sharedFaces}}
\State $c_{\text{best}} \gets \infty$
\State $\sigma^* \gets \emptyset$
\State $iter \gets 0$
\State Initialize parallel search processes
\While{$iter < N_{\text{max}}$ \textbf{and} not converged}
  \For{each path $p \in \mathcal{P}$ \textbf{in parallel}}
    \State $S_p \gets \texttt{generateSamples}(\mathcal{F}, p)$
    \State $c_p, \sigma_p \gets \texttt{optimizePath}(x_{\text{start}}, x_{\text{goal}}, S_p, \mathcal{L})$
    \If{$c_p < c_{\text{best}}$}
      \State $c_{\text{best}} \gets c_p$
      \State $\sigma^* \gets \sigma_p$
      \State $(\mathcal{E}, \mathcal{L}, \mathcal{A}, \mathcal{P}) \gets \texttt{updateReachableSet}(x_{\text{start}}, x_{\text{goal}}, c_{\text{best}}, \mathcal{L}, \mathcal{A}, \mathcal{P})$ \Comment{Algo. \ref{alg:updateReachableSet}}
      \State $\mathcal{F} \gets \texttt{computeSharedFaces}(\mathcal{L}, \mathcal{A})$
    \EndIf
  \EndFor
  \State $iter \gets iter + 1$
\EndWhile
\State \textbf{return} $\sigma^*$
\end{algorithmic}
\end{algorithm}

The \texttt{updateReachableSet} procedure (Algorithm~\ref{alg:updateReachableSet}) constructs an ellipsotope from the current best path cost and updates the adjacency relationships to reflect the pruned search space. This iterative refinement concentrates computational resources on regions that admit potential improvement, thereby accelerating convergence to the optimal solution.

\begin{algorithm}[t]
\caption{Update Reachable Set with Ellipsotope}
\label{alg:updateReachableSet}
\begin{algorithmic}[1]
\Require Start $x_{\text{start}}$, goal $x_{\text{goal}}$, current best cost $c_{\text{best}}$, leaves $\mathcal{L}$, adjacency matrix $\mathcal{A}$, path set $\mathcal{P}$
\Ensure Ellipsotope $\mathcal{E}$, pruned leaf set $\mathcal{L}_E$, pruned adjacency $\mathcal{A}_E$, pruned paths $\mathcal{P}_E$
\State $d \gets x_{\text{goal}} - x_{\text{start}}$
\State $\|d\| \gets \sqrt{d^\top d}$
\State $\hat{d} \gets d / \|d\|$
\State $c \gets (x_{\text{start}} + x_{\text{goal}})/2$
\State $a \gets c_{\text{best}}/2$
\State $b \gets \sqrt{a^2 - (\|d\|/2)^2}$
\State $Q \gets \texttt{constructShapeMatrix}(\hat{d}, a, b)$
\State $\mathcal{E} \gets \mathcal{E}_2(c, Q)$ \Comment{Ellipsotope, ~Def.~\ref{def:ellipsotope}; }

\State $\mathcal{L}_E \gets \{\, i \in \mathcal{L} \mid Z^i \cap \mathcal{E} \neq \emptyset \,\}$
\State $\mathcal{A}_E \gets \mathcal{A}[\mathcal{L}_E,\mathcal{L}_E]$
\State $\mathcal{P}_E \gets \emptyset$
\For{each path $\sigma_s$ with leaf indices $(i_1,\dots,i_K)$ in $\mathcal{P}$}
    \If{$i_j \in \mathcal{L}_E$ for all $j=1,\dots,K$}
        \State $\mathcal{P}_E \gets \mathcal{P}_E \cup \{\sigma_s\}$
    \EndIf
\EndFor
\State \textbf{return} $\mathcal{E}, \mathcal{L}_E, \mathcal{A}_E, \mathcal{P}_E$
\end{algorithmic}
\end{algorithm}

Algorithm~\ref{alg:HZMP} operates in three phases. First, it constructs the hybrid zonotope decomposition and identifies the leaves containing the start and goal states (lines 1--7). All feasible paths $\mathcal{P}$ are enumerated via breadth-first search on $\mathcal{A}$. The core optimization phase (lines 12--20) explores each path $p \in \mathcal{P}$ in parallel by drawing samples on the shared faces between consecutive leaves and solving $\mathbf{s}_p^* = \arg\min_{\mathbf{s} \in S_p} c(\sigma_{\mathbf{s}})$ for the best waypoint configuration. After each improvement in $c_{\text{best}}$, Algorithm~\ref{alg:updateReachableSet} constructs an ellipsotope-informed reachable set and prunes leaves, adjacency relations, and candidate paths that lie entirely outside this region (lines 17--18), restricting subsequent sampling to the subset of the graph that can still yield better solutions.

\begin{lemma}[Safe ellipsoidal pruning]\label{lem:ellipsoid-pruning}
Let $c(\sigma)$ denote the path-length cost of any feasible path $\sigma$ from $x_{\mathrm{start}}$ to $x_{\mathrm{goal}}$, let $c_{\mathrm{best}} \geq c^\star$ be the cost of the current best solution with $c^\star$ the optimal cost, and let $\mathcal{E}(c_{\mathrm{best}})$ be the ellipsotope-informed reachable set. Then:
\begin{enumerate}[(i)]
  \item For any feasible path $\sigma$ with $c(\sigma) < c_{\mathrm{best}}$ and any $t \in [0,1]$, it holds that $\sigma(t) \in \mathcal{E}(c_{\mathrm{best}})$.
  \item Suppose $X_{\mathrm{free}} = \bigcup_{i=1}^M Z^i$ is covered by hybrid zonotope leaves, and define
  \[
    \mathcal{L}_\mathcal{E}(c_{\mathrm{best}}) := \{\, i \mid Z^i \cap \mathcal{E}(c_{\mathrm{best}}) \neq \emptyset \,\}.
  \]
  Then any feasible path with cost $c(\sigma) < c_{\mathrm{best}}$ visits only leaves in $\mathcal{L}_\mathcal{E}(c_{\mathrm{best}})$. In particular, pruning all leaves and shared faces contained in $X_{\mathrm{free}} \setminus \mathcal{E}(c_{\mathrm{best}})$ cannot remove the optimal path.
\end{enumerate}
\end{lemma}

\begin{proof}
For any feasible path $\sigma$ and any $t \in [0,1]$, let $x = \sigma(t)$. The subpaths from $x_{\mathrm{start}}$ to $x$ and from $x$ to $x_{\mathrm{goal}}$ have lengths at least $\|x - x_{\mathrm{start}}\|_2$ and $\|x - x_{\mathrm{goal}}\|_2$, respectively. Hence,
\[
  \|x - x_{\mathrm{start}}\| + \|x - x_{\mathrm{goal}}\|
  \leq c(\sigma) < c_{\mathrm{best}},
\]
so $x \in \mathcal{E}(c_{\mathrm{best}})$, proving (i). For (ii), every point $\sigma(t)$ lies in some leaf $Z^{i_t}$ and, by (i), also in $ \mathcal{E}(c_{\mathrm{best}})$, so $Z^{i_t} \cap  \mathcal{E}(c_{\mathrm{best}}) \neq \emptyset$ and $i_t \in \mathcal{L}_ \mathcal{E}(c_{\mathrm{best}})$. Thus any path with cost $< c_{\mathrm{best}}$ only uses leaves in $\mathcal{L}_ \mathcal{E}(c_{\mathrm{best}})$, and its transitions occur on shared faces inside $ \mathcal{E}(c_{\mathrm{best}})$, so pruning outside $ \mathcal{E}(c_{\mathrm{best}})$ cannot discard the optimal path.
\end{proof}

By Lemma~\ref{lem:ellipsoid-pruning}, ellipsoidal pruning never removes the optimal path or any path with cost smaller than $c_{\text{best}}$. Therefore, the pruned HZ-MP algorithm retains the same probabilistic completeness and asymptotic optimality guarantees as the unpruned variant established in Theorems~\ref{thm:completeness} and~\ref{thm:optimality}.

\subsubsection{Path Cost Structure}

For a path defined by waypoints $\mathbf{s} = (s_1, \ldots, s_{k-1})$ with each $s_j \in \mathcal{F}_{i_j,i_{j+1}}$, the total cost is
\begin{equation}
c(\sigma_{\mathbf{s}}) = \|x_{\text{start}} - s_1\| + \sum_{j=1}^{k-2} \|s_j - s_{j+1}\| + \|s_{k-1} - x_{\text{goal}}\|.
\end{equation}
Because each leaf is convex, straight-line segments between consecutive waypoints are guaranteed to be collision-free, thereby eliminating the need for explicit collision checking during path evaluation.

\paragraph{Practical path enumeration.}
Since the number of simple paths can grow exponentially, the implementation caps the maximum path length by a hop limit $L_{\max}$ and retains only the $K$ most promising leaf sequences ranked by a geometric cost lower bound~\cite{yen1971finding, eppstein1998finding}. Combined with ellipsotope pruning (Lemma~\ref{lem:ellipsoid-pruning}), this keeps $|\mathcal{P}|$ moderate in practice.

% while preserving the probabilistic completeness and asymptotic optimality guarantees.

% \subsubsection{Sampling Methods}

% % Uniform sampling on $(n-1)$-dimensional faces requires specialized techniques. The Hit-and-Run algorithm \cite{smith1984efficient} generates samples by iteratively updating a current point $x^{(t)} \in \mathcal{F}_{ij}$. 

% At each iteration, it samples a random direction $d \sim \mathcal{N}(0, P_{\mathcal{F}_{ij}})$ where $P_{\mathcal{F}_{ij}}$ projects onto the face tangent space, computes the feasible interval $\Lambda = \{\lambda : x^{(t)} + \lambda d \in \mathcal{F}_{ij}\}$ via linear programming, and updates the position as $x^{(t+1)} = x^{(t)} + \lambda d$ where $\lambda \sim \text{Uniform}(\Lambda)$. 

% For faces with complex geometry, we alternatively employ the Billiard Walk method \cite{polyak2014billiard}, which simulates Hamiltonian dynamics with specular reflections at boundaries to ensure ergodic exploration.

\subsection{Ellipsotope-informed path refinement}

Upon discovering a path with cost $c_{\text{best}}$, the algorithm constructs an ellipsotope that represents the reachable set:
\begin{equation}
\mathcal{E} = \{x \in \mathbb{R}^n : \|x - x_{\text{start}}\| + \|x - x_{\text{goal}}\| \leq c_{\text{best}}\}.
\end{equation}

This set can be expressed as an ellipsotope (Definition~\ref{def:ellipsotope}) with $p = 2$ and $\mathcal{I} = \{\{1, \ldots, n_g\}\}$:
\begin{equation}
\mathcal{E} = \mathcal{E}_2(c, G) = \{c + G\xi : \|\xi\|_2 \leq 1\},
\end{equation}
where the center $c = \frac{1}{2}(x_{\text{start}} + x_{\text{goal}})$ and the generator matrix $G \in \mathbb{R}^{n \times n}$ is constructed so that the ellipsotope has semi-major axis $a = c_{\text{best}}/2$ along the direction $\hat{d} = \frac{x_{\text{goal}} - x_{\text{start}}}{\|x_{\text{goal}} - x_{\text{start}}\|_2}$ and semi-minor axes $b = \sqrt{a^2 - \|d\|^2/4}$ in the orthogonal directions.

The ellipsotope representation precisely bounds all states that could potentially improve the current solution while enabling efficient intersection operations with constrained zonotope leaves. The intersection of an ellipsotope and a constrained zonotope can be computed using constrained convex generators~\cite{ccg2022}, which provide a systematic method for combining their constraints, as illustrated in Figure~\ref{fig:ellipsotope_intersection}. The algorithm classifies leaf nodes on the basis of their intersection with $\mathcal{E}$: \emph{inactive} if $\mathcal{Z}_{c,i} \cap \mathcal{E} = \emptyset$; \emph{partial} if $\emptyset \neq \mathcal{Z}_{c,i} \cap \mathcal{E} \neq \mathcal{Z}_{c,i}$; and \emph{active} if $\mathcal{Z}_{c,i} \subseteq \mathcal{E}$. The intersection $\mathcal{Z}_{c,i} \cap \mathcal{E}$ can be computed using the techniques of~\cite{holmes2023searchbased}, enabling rapid pruning of infeasible regions. To further simplify the computation, ellipsotopes may also be converted to a constrained zonotope representation, as illustrated in Figure~\ref{fig:updated_path}. This conversion permits all set operations to be performed within the zonotope-based framework, thereby maintaining computational efficiency.

\subsubsection*{Theoretical Guarantees}

The hybrid zonotope framework reduces motion planning in obstacle-cluttered environments to a sequence of convex subproblems, enabling the following theoretical guarantees.

% \begin{lemma}[Shared Face Properties]\label{lem:shared_face}
% For adjacent leaf nodes $\mathcal{Z}_{c,i}$ and $\mathcal{Z}_{c,j}$ of a hybrid zonotope $\mathcal{Z}_h = \langle G^c, G^b, c, A^c, A^b, b \rangle$, their shared face $\mathcal{F}_{ij}$ is a constrained zonotope of dimension at most $(n-1)$ with non-zero $(n-1)$-dimensional Hausdorff measure.
% \end{lemma}

% \begin{proof}
% From Corollary \ref{cor-sharedFaces}, the shared face is:
% $$\mathcal{F}_{ij} = \left\{\left[G^c \: \mathbf{0}\right]\xi + c + G^b\xi^b_i \: \middle| \: \begin{matrix} 
% A^c\xi = b - A^b\xi^b_i \\
% G^c\xi = G^b(\xi^b_j - \xi^b_i) \\
% \|\xi\|_\infty \leq 1
% \end{matrix}\right\}$$
% The constraint $G^c\xi = G^b(\xi^b_j - \xi^b_i)$ imposes at least one linear constraint on $\xi$, reducing the dimension by at least 1. Since $\xi^b_i \neq \xi^b_j$ (adjacent leaves are distinct), the constraint is non-trivial and $\mathcal{F}_{ij}$ has positive $(n-1)$-dimensional measure.
% \end{proof}

\begin{theorem}[Probabilistic Completeness]\label{thm:completeness}
Let\/ $\mathcal{Z}_h$ be a hybrid zonotope representation of\/ $\mathcal{X}_{\textup{free}}$.
If a feasible solution exists, then
$\lim_{N_s \to \infty} \mathbb{P}[\textup{solution found}] = 1$,
where $N_s$ denotes the number of samples per shared face.
\end{theorem}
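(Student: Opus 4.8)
The plan is to locate, among the finitely many paths the algorithm enumerates, the one that ``shadows'' a known feasible path, to exhibit a positive–probability region of waypoint tuples on its shared faces that each decode to a collision‑free piecewise‑linear path, and then to close with a union bound over $N_s$ i.i.d.\ samples per face. \emph{Step 1 (reduction to a graph path).} By hypothesis there is a continuous collision‑free path $\sigma:[0,1]\to\mathcal{X}_{\text{free}}$. Since $\mathcal{X}_{\text{free}}=\mathcal{Z}_h=\bigcup_i \mathcal{Z}_{c,i}$ is a finite union of closed convex leaves (Definition~\ref{def-hybridZono} and the leaf decomposition of \eqref{leaf_exp}), $\sigma$ visits a finite ordered subsequence of leaves, and consecutive visited leaves intersect, hence are adjacent in the sense of Proposition~\ref{prop:leaf-intersection}. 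I first argue that $\sigma$ can be homotoped within $\mathcal{X}_{\text{free}}$ to a feasible path $\bar\sigma$ that (i) realizes an ordered leaf sequence $p^\star=(i_0,\dots,i_k)$ with $i_0=i_{\text{start}}$, $i_k=i_{\text{goal}}$, and (ii) crosses each interface transversally at a point $q_j$ in the relative interior of the shared face $\mathcal{F}_{i_j,i_{j+1}}$, which is nonempty because Corollary~\ref{cor-sharedFaces} guarantees $\mathcal{F}_{i_j,i_{j+1}}$ has strictly positive $(n-1)$‑dimensional Hausdorff measure. Since the algorithm enumerates all connected paths $\mathcal{P}$ by breadth‑first search on the finite adjacency graph, $p^\star\in\mathcal{P}$, so $p^\star$ is processed in the parallel loop of Algorithm~\ref{alg:HZMP}.

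\emph{Step 2 (a positive‑measure ``good tube'').} Set $q_0=x_{\text{start}}$, $q_k=x_{\text{goal}}$, and fix the crossing points $q_j\in\operatorname{relint}\mathcal{F}_{i_j,i_{j+1}}$ from Step~1. Convexity of the leaves gives $[q_j,q_{j+1}]\subset\mathcal{Z}_{c,i_{j+1}}$ for each $j$. A continuity and compactness argument then yields $\varepsilon>0$ such that for every waypoint tuple $\mathbf{s}=(s_1,\dots,s_{k-1})$ with $s_j\in B_j:=\overline{B}(q_j,\varepsilon)\cap\mathcal{F}_{i_j,i_{j+1}}$, every segment $[s_j,s_{j+1}]$ still lies in $\mathcal{Z}_{c,i_{j+1}}\subseteq\mathcal{X}_{\text{free}}$; hence $\sigma_{\mathbf s}$ is collision‑free with finite cost, and so \texttt{optimizePath} on $p^\star$ returns a genuine solution (any local refinement it applies starting from a feasible tuple only decreases the cost and stays feasible by per‑leaf convexity). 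Each $B_j$ is a compact subset of $\operatorname{relint}\mathcal{F}_{i_j,i_{j+1}}$ of positive $(n-1)$‑dimensional measure, so $\mu_j:=\mathbb{P}[s\in B_j]>0$ for a uniform sample $s$ on $\mathcal{F}_{i_j,i_{j+1}}$; for the Hit‑and‑Run and Billiard Walk samplers \cite{smith1984efficient,polyak2014billiard}, whose stationary law is uniform and whose density is bounded below on compacta of the relative interior, the same lower bound on the hitting probability of $B_j$ holds after burn‑in up to a fixed constant.

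\emph{Step 3 (probabilistic conclusion).} With $N_s$ independent samples per shared face, $\mathbb{P}[\text{no sample lands in }B_j]=(1-\mu_j)^{N_s}$, so a union bound over the $k\le|\mathcal{L}|$ interfaces of $p^\star$ gives
\[
\mathbb{P}\big[\exists\, j:\ \text{no sample in }B_j\big]\ \le\ \sum_{j=1}^{k}(1-\mu_j)^{N_s}\ \longrightarrow\ 0 \quad\text{as } N_s\to\infty.
\]
On the complementary event every $B_j$ contains at least one sample, and the tuple obtained by picking one such sample per interface decodes to a collision‑free path that \texttt{optimizePath} evaluates; hence $\mathbb{P}[\text{solution found}]\ge 1-\sum_{j}(1-\mu_j)^{N_s}\to 1$. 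It remains to note that the ellipsotope pruning of Algorithm~\ref{alg:updateReachableSet} only discards leaves disjoint from $\mathcal{E}=\{x:\|x-x_{\text{start}}\|_2+\|x-x_{\text{goal}}\|_2\le c_{\text{best}}\}$, and $\mathcal{E}$ contains the whole curve of any feasible path not yet improved upon; therefore the interfaces of $p^\star$ are never removed before a solution is recorded, and the bound is unaffected.

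\emph{Main obstacle.} Step~2 is the crux: rigorously constructing the reference path $\bar\sigma$ that crosses every interface through its relative interior, and establishing a single uniform tube radius $\varepsilon>0$, requires care because the $q_j$ lie on leaf boundaries and a naïve perturbation of a waypoint can exit a leaf; ``thin'' leaves, and faces shared by more than two leaves, call for a transversality / general‑position argument on top of the convexity estimate. A secondary technical point is upgrading the idealized ``$\mathrm{Uniform}(\mathcal{F}_{ij})$'' assumption to the genuine output law of Hit‑and‑Run or Billiard Walk, for which one must invoke their geometric ergodicity to obtain the uniform lower bound $\mu_j>0$ on the probability of hitting the compact set $B_j$.
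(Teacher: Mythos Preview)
Your proposal is correct and follows essentially the same route as the paper: extract from a feasible path the induced leaf sequence $p^\star$, note that it lies in the enumerated set $\mathcal{P}$, use the positive $(n-1)$-dimensional measure of each shared face (Corollary~\ref{cor-sharedFaces}) to show that uniform sampling hits an $\varepsilon$-neighborhood of the reference crossing points with probability tending to~$1$, and invoke per-leaf convexity to conclude the resulting piecewise-linear path is collision-free. You are, if anything, more careful than the paper---you flag the relative-interior/transversality issue, the gap between idealized uniform sampling and the actual Hit-and-Run/Billiard Walk output law, and the need to verify that ellipsotope pruning never discards $p^\star$ before a solution is recorded, none of which the paper's own proof addresses.
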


\begin{proof}
Let $\sigma^*: [0,1] \to \mathcal{X}_{\text{free}}$ be a feasible path. Because $\mathcal{Z}_h$ represents $\mathcal{X}_{\text{free}}$, there exists a sequence of binary factors $\{\xi^{b,*}_t\}_{t \in [0,1]}$ such that
$$\sigma^*(t) = G^c\xi^{c,*}_t + G^b\xi^{b,*}_t + c$$
for some piecewise continuous factors $\xi^{c,*}_t \in \mathcal{B}_\infty^{n_g}$ satisfying $A^c\xi^{c,*}_t + A^b\xi^{b,*}_t = b$. Note that $\sigma^*$ remains spatially continuous at each switching point $s_j^* \in \mathcal{F}_{i_j,i_{j+1}}$ even though $\xi^{c,*}_t$ may be discontinuous there.

Because $\xi^{b,*}_t \in \{-1,1\}^{n_b}$ is discrete and the path is continuous, there exist times $0 = t_0 < t_1 < \ldots < t_k = 1$ at which $\xi^{b,*}_{t_j} \neq \xi^{b,*}_{t_{j+1}}$. This defines a sequence of leaves $\mathcal{I}^* = (i_0, \ldots, i_k)$ with transition points $s_j^* = \sigma^*(t_j) \in \mathcal{F}_{i_j,i_{j+1}}$.

By Corollary~\ref{cor-sharedFaces}, each shared face $\mathcal{F}_{i_j,i_{j+1}}$ has positive $(n{-}1)$-dimensional measure, i.e., $\mu_{n-1}(\mathcal{F}_{i_j,i_{j+1}}) > 0$.
For uniform sampling on $\mathcal{F}_{i_j,i_{j+1}}$,
$$\mathbb{P}[\|s_j^{(l)} - s_j^*\| < \epsilon] \geq \frac{\mu_{n-1}(B_\epsilon(s_j^*) \cap \mathcal{F}_{i_j,i_{j+1}})}{\mu_{n-1}(\mathcal{F}_{i_j,i_{j+1}})} \geq p_\epsilon > 0.$$

It follows that
$$\mathbb{P}[\min_{l \in \{1,\ldots,N_s\}} \|s_j^{(l)} - s_j^*\| < \epsilon] = 1 - (1 - p_\epsilon)^{N_s} \to 1 \text{ as } N_s \to \infty.$$

Because each leaf $\mathcal{Z}_{c,i}$ is convex (being a constrained zonotope), straight-line paths between points within a leaf are collision-free. The algorithm therefore finds a feasible path with probability approaching~1.
\end{proof}

\begin{theorem}[Asymptotic Optimality]\label{thm:optimality}
Under uniform sampling on the shared faces of $\mathcal{Z}_h$, $\lim_{N_s \to \infty} \mathbb{E}[c_{\textup{best}}] = c^*$, where $c^*$ denotes the optimal cost.
\end{theorem}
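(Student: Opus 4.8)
The plan is to reduce the statement to four ingredients: (1) the optimization landscape the algorithm actually searches — piecewise-linear paths with one waypoint per leaf transition, each waypoint constrained to a shared face — contains paths whose cost is arbitrarily close to $c^\star$; (2) the path-cost functional is Lipschitz in the waypoints; (3) uniform sampling on the shared faces concentrates, with probability tending to one, arbitrarily near any prescribed tuple of waypoints (the mechanism already exploited in the proof of Theorem~\ref{thm:completeness}); and (4) $c_{\text{best}}$ is deterministically bounded, so a high-probability cost bound can be promoted to a bound on $\mathbb{E}[c_{\text{best}}]$. Since every path the algorithm returns is feasible, $c_{\text{best}} \ge c^\star$ surely, so it suffices to prove $\limsup_{N_s\to\infty}\mathbb{E}[c_{\text{best}}] \le c^\star$.

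For ingredient (1): a length-minimizing collision-free curve in the polytopic free space $\mathcal{X}_{\text{free}} = \bigcup_i \mathcal{Z}_{c,i}$ is polygonal — inside each convex leaf the geodesic is a straight segment, and direction changes can occur only on the boundaries between leaves. Hence there is a finite leaf sequence $\mathcal{I}^\star = (i_0^\star,\dots,i_k^\star)$, which may be taken simple (a repeated leaf can be short-cut without increasing length) and therefore lies in the enumerated set $\mathcal{P}$, together with turning points $s_j^\star \in \mathcal{F}_{i_j^\star,i_{j+1}^\star}$ such that $c(\sigma_{\mathbf{s}^\star}) = c^\star$ for $\mathbf{s}^\star = (s_1^\star,\dots,s_{k-1}^\star)$; if the minimizer bends at a reflex vertex shared by several leaves, that vertex is a limit of interior points of a shared face, so the equality is recovered up to an arbitrarily small $\epsilon$ by a slight perturbation into a neighboring leaf. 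I expect making this characterization airtight — in particular, that the algorithm's restricted search class is $c^\star$-dense, and that the pruning ellipsotope never excludes a neighborhood of $\mathbf{s}^\star$ (which holds because the current $\mathcal{E}$ always contains $\{x:\|x-x_{\text{start}}\|+\|x-x_{\text{goal}}\|\le c_{\text{best}}\}$, a superset of the $c^\star$-ellipsotope once $c_{\text{best}}\ge c^\star$) — to be the main obstacle; everything else is routine.

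For ingredients (2)–(3): writing $c(\sigma_{\mathbf{s}})$ as a sum of Euclidean norms of consecutive differences, each waypoint enters exactly two summands, so $|c(\sigma_{\mathbf{s}}) - c(\sigma_{\mathbf{s}'})| \le 2\sum_{j}\|s_j - s_j'\| \le 2(k-1)\max_j\|s_j - s_j'\|$. Fix $\delta>0$. By Corollary~\ref{cor-sharedFaces} each $\mathcal{F}_{i_j^\star,i_{j+1}^\star}$ has positive $(n-1)$-dimensional measure, and as long as $c_{\text{best}}>c^\star$ the point $s_j^\star$ lies in the \emph{interior} of the current ellipsotope, so the post-pruning sampling domain still assigns positive probability $p_{\delta,j}>0$ to the $\delta$-ball around $s_j^\star$ (and if at some iteration $c_{\text{best}}=c^\star$ we are already done). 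Exactly as in the proof of Theorem~\ref{thm:completeness}, with $N_s$ samples per face the event $E_{N_s} = \{\forall j\ \exists l:\ \|s_j^{(l)}-s_j^\star\|<\delta\}$ has probability $\prod_{j}\bigl(1-(1-p_{\delta,j})^{N_s}\bigr)\to 1$. On $E_{N_s}$ the algorithm — which optimizes over waypoint tuples drawn from the sampled sets on each path, including $\mathcal{I}^\star$ — produces a feasible path of cost at most $c^\star + 2(k-1)\delta$, hence $c_{\text{best}}\le c^\star + 2(k-1)\delta$.

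For ingredient (4): since $\mathcal{X}_{\text{free}}$ is bounded (a hybrid zonotope is the image of a bounded set), any polygonal path through a leaf sequence of length at most $k_{\max} := \max_{p\in\mathcal{P}}|p|$ has cost at most $c_{\max} := k_{\max}\,\mathrm{diam}(\mathcal{X}_{\text{free}})$, and a single sample per shared face already yields such a feasible path, so $c_{\text{best}}\le c_{\max}$ surely. Therefore $\mathbb{E}[c_{\text{best}}] \le (c^\star + 2(k-1)\delta)\,\mathbb{P}(E_{N_s}) + c_{\max}\,\mathbb{P}(E_{N_s}^c)$, and letting $N_s\to\infty$ gives $\limsup_{N_s\to\infty}\mathbb{E}[c_{\text{best}}] \le c^\star + 2(k-1)\delta$. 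As $\delta>0$ is arbitrary, $\limsup_{N_s\to\infty}\mathbb{E}[c_{\text{best}}] \le c^\star$; combined with $\mathbb{E}[c_{\text{best}}]\ge c^\star$ this yields $\lim_{N_s\to\infty}\mathbb{E}[c_{\text{best}}] = c^\star$. The monotone non-increase of $c_{\text{best}}$ across iterations of Algorithm~\ref{alg:HZMP} makes the conclusion robust to how the $N_s$ samples per face are accumulated over the main loop.
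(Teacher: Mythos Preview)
Your proposal is correct and follows essentially the same route as the paper's proof: identify optimal transition waypoints $\mathbf{s}^\star$ on the shared faces, use Lipschitz continuity of the piecewise-linear cost in the waypoints, and invoke the sampling-density argument from Theorem~\ref{thm:completeness} to drive the nearest-sample distance to zero. Your version is in fact more careful than the paper's, which omits the lower bound $c_{\text{best}}\ge c^\star$, the boundedness step needed to pass from a.s.\ (or high-probability) convergence to convergence of $\mathbb{E}[c_{\text{best}}]$, and any mention of the ellipsotope pruning not excluding $\mathbf{s}^\star$.
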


\begin{proof}
The optimal path $\sigma^*$ with cost $c^*$ passes through transition points $\mathbf{s}^* = (s_1^*, \ldots, s_{k-1}^*)$. It is first shown that the cost $c(\sigma_{\mathbf{s}})$ is Lipschitz continuous in the waypoint vector $\mathbf{s} = (s_1,\ldots,s_{k-1})$, where
\[
  c(\sigma_{\mathbf{s}}) 
  = \|x_{\text{start}} - s_1\|
    + \sum_{j=1}^{k-2} \|s_j - s_{j+1}\|
    + \|s_{k-1} - x_{\text{goal}}\|.
\]

For any two waypoint vectors $\mathbf{s},\mathbf{s}'$,
each term in $c(\sigma_{\mathbf{s}})$ is $1$-Lipschitz in each of its
arguments. By the triangle inequality (e.g.,~\cite{horn2012matrix}),
\[
  |c(\sigma_{\mathbf{s}}) - c(\sigma_{\mathbf{s}'})|
  \le 2 \sum_{j=1}^{k-1} \|s_j - s'_j\|
  \le 2(k-1)\,\|\mathbf{s} - \mathbf{s}'\|_\infty,
\]
where $\|\mathbf{s}\|_\infty := \max_{j=1,\dots,k-1}\|s_j\|$.
Thus $c(\sigma_{\mathbf{s}})$ is Lipschitz with constant
$L = 2(k-1)$.

By uniform sampling on each constrained zonotope face $\mathcal{F}_{i_j,i_{j+1}}$ and the strong law of large numbers for empirical measures on compact sets~\cite{dudley2018real, dudley1966weak},
\[
  \delta_N 
  := \max_{j=1,\ldots,k-1} \min_{l=1,\ldots,N_s} \|s_j^{(l)} - s_j^*\|
  \;\xrightarrow[N_s \to \infty]{\text{a.s.}}\; 0,
\]
where $s_j^*$ denotes the optimal waypoint on the $j$-th face.
By Lipschitz continuity,
\[
  |c_{\text{best}} - c^*|
  \;\le\; L\,\delta_N
  \;\xrightarrow[N_s \to \infty]{\text{a.s.}}\; 0,
\]
so $c_{\text{best}} \to c^*$ almost surely and 
$\lim_{N_s \to \infty} \mathbb{E}[c_{\text{best}}] = c^*$.

% This function is Lipschitz continuous with constant $L = 2\sqrt{k}$ since:
% $$|c(\sigma_{\mathbf{s}}) - c(\sigma_{\mathbf{s}'})| \leq \sum_{j=0}^{k} \|s_j - s'_j\|_2 \leq 2\sqrt{k}\|\mathbf{s} - \mathbf{s}'\|_\infty$$

% where we define $s_0 = x_{\text{start}}$ and $s_k = x_{\text{goal}}$.

% By uniform sampling on each constrained zonotope face $\mathcal{F}_{i_j,i_{j+1}}$ and the strong law of large numbers for empirical measures on compact sets:
% $$\delta_N := \max_{j \in \{1,\ldots,k-1\}} \min_{l \in \{1,\ldots,N_s\}} \|s_j^{(l)} - s_j^*\|_2 \to 0 \text{ a.s.}$$

% By Lipschitz continuity: $|c_{\text{best}} - c^*| \leq L\delta_N \to 0$ a.s., yielding $\lim_{N_s \to \infty} \mathbb{E}[c_{\text{best}}] = c^*$.
\end{proof}

\section{Numerical examples}\label{sec-numericalEx}

The proposed hybrid zonotope-based motion planning algorithm is evaluated on challenging scenarios that demonstrate its effectiveness in complex environments with narrow passages and enclosures. All experiments are implemented in MATLAB and executed on a Lenovo laptop equipped with an Intel 64-bit processor (1.5\,GHz) and 32\,GB of RAM on a single CPU core, without GPU acceleration.

\subsection{Convergence analysis}

The algorithm is evaluated on a planar maze environment $\mathcal{X} = [-100, 800] \times [-100, 600]$ containing 13 obstacles: rectangular barriers, polygonal approximations of circles, U-shaped regions, star polygons, and narrow passages (60\,cm width). The start state is $x_{\text{start}} = (400, 250)^T$ and the goal state is $x_{\text{goal}} = (385, 470)^T$, requiring navigation through the maze structure and around multiple obstacles. This environment exemplifies the challenges of motion planning in cluttered spaces with both narrow passages and complex obstacle geometries.

\begin{figure}[t]
    \centering
    \includegraphics[width=0.8\linewidth]{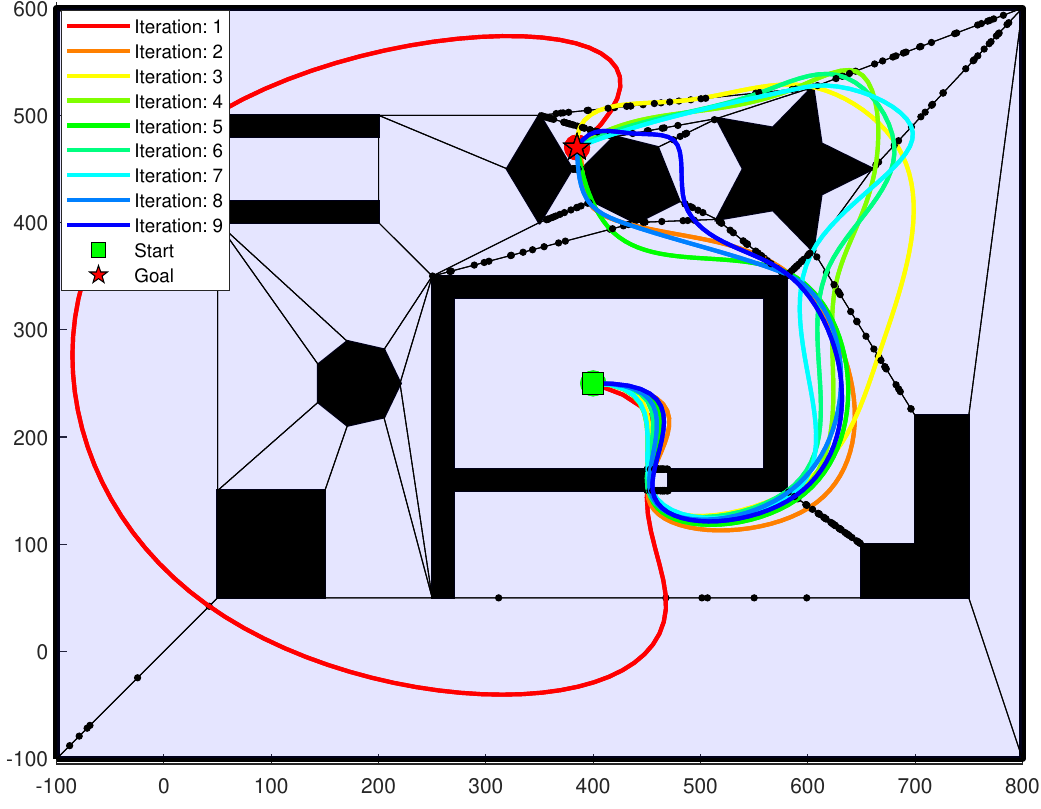}
    \caption{Motion planning scenario depicting the environment with obstacles (gray), the start state (green), the goal (red), and the solution path obtained by the proposed algorithm.}
    \label{fig:iteration_plot}
\end{figure}
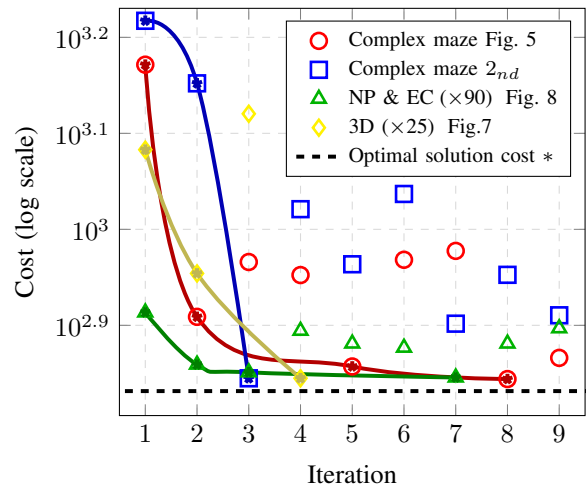
\begin{figure}[t]
    \centering
    \begin{tikzpicture}
    \begin{semilogyaxis}[
        width=0.9\linewidth,
        height=5.5cm,
        xlabel={Iteration},
        ylabel={Cost (log scale)},
        xmin=0.5, xmax=9.5,
        ymin=640, ymax=1700,
        xtick={1,2,3,4,5,6,7,8,9},
        grid=major,
        grid style={dashed, gray!30},
        legend pos=north east,
        legend style={font=\footnotesize, cells={anchor=west}},
    ]
    
    % Dataset 1: Complex maze (iterationplot3.pdf)
    \addplot[
        only marks,
        mark=o,
        mark size=3pt,
        color=red,
        line width=1pt
    ] coordinates {
        (1, 1485)
        (2, 811)
        (3, 925)
        (4, 896.5)
        (5, 720)
        (6, 930)
        (7, 950)
        (8, 698.603)
        (9, 735)
    };
    
    % Smooth curve connecting improvements for Dataset 1
    \addplot[
        color=red!70!black,
        line width=1.5pt,
        smooth,
        tension=0.7,
        mark=star,
        mark size=2pt,
    ] coordinates {
        (1, 1485)
        (2, 811)
        (5, 720)
        (8, 698.603)
    };
    
    % Dataset 2: Complex maze 2
    \addplot[
        only marks,
        mark=square,
        mark size=3pt,
        color=blue,
        line width=1pt
    ] coordinates {
        (1, 1650)
        (2, 1420)
        (3, 699.8)
        (4, 1050)
        (5, 920)
        (6, 1089)
        (7, 798)
        (8, 897)
        (9, 814)
    };
    
    % Smooth curve connecting improvements for Dataset 2
    \addplot[
        color=blue!70!black,
        line width=1.5pt,
        smooth,
        tension=0.7,
        mark=star,
        mark size=2pt,
    ] coordinates {
        (1, 1650)
        (2, 1420)
        (3, 699.8)
    };
    
    % Dataset 3: Narrow passage scenario
    \addplot[
        only marks,
        mark=triangle,
        mark size=3pt,
        color=green!70!black,
        line width=1pt
    ] coordinates {
       (1, 819.58)
(2, 723.27)
(3, 709.96)
(4, 784.46)
(5, 760.79)
(6, 753.28)
(7, 701.21)
(8, 760.73)
(9, 789.18)
    };
    
    % Smooth curve connecting improvements for Dataset 3
    \addplot[
        color=green!50!black,
        line width=1.5pt,
        smooth,
        tension=0.7,
        mark=star,
        mark size=2pt,
    ] coordinates {
        (1, 819.58)
(2, 723.27)
(3, 709.96)
(7, 701.21)
    };
    
    % Dataset 4: 3D environment
    \addplot[
        only marks,
        mark=diamond,
        mark size=3pt,
        color=yellow,
        line width=1pt
    ] coordinates {
        (1, 1211)
        (2, 900)
        (3, 1320)
        (4, 700)
    };
    
    % Smooth curve connecting improvements for Dataset 4
    \addplot[
        color=yellow!70!black,
        line width=1.5pt,
        smooth,
        tension=0.7,
        mark=star,
        mark size=2pt,
    ] coordinates {
        (1, 1211)
        (2, 900)
        (4, 700)
    };

      % Optimal solution cost line
    \addplot[
        color=black!70!black,
        line width=1.5pt,
        dashed,
    ] coordinates {
        (0.3, 678.7)
        (9.7, 678.7)
    };
    
    \legend{
        Complex maze Fig.~\ref{fig:iteration_plot},
        ,
        Complex maze $2_{nd}$,
        ,
        NP \& EC\;($\times$90)\; Fig.~\ref{fig:narrow_gap},
        ,
        3D ($\times$25)\; Fig.\ref{fig:3d_narrow},
        ,
        Optimal solution cost $*$
    }
    
    \end{semilogyaxis}
    \end{tikzpicture}
    \caption{Convergence analysis across different environments illustrating the stochastic nature of the sampling-based approach. Markers indicate costs at each iteration; stars connected by smooth curves denote iterations at which a better solution was found and adopted. The non-monotonic behavior between starred points reflects the exploration--exploitation trade-off inherent in sampling-based methods. For visualization purposes, costs for the NP \& EC and 3D scenarios are scaled by factors of 90 and 25, respectively.}
    \label{fig:cost_convergence_comparison}
\end{figure}

Figure~\ref{fig:iteration_plot} illustrates the convergence behavior of the algorithm in a complex 2D environment. Over 8 iterations, significant solution improvements occur at iterations 1, 2, 5, and 8 (marked with green stars), yielding a final cost of 698.603. The cost exhibits a rapid initial decrease followed by stabilization, consistent with the theoretical convergence guarantees. The hybrid zonotope decomposition enables systematic exploration of multiple homotopy classes, thereby efficiently identifying the globally optimal path.

\subsection{3D narrow passage navigation}

The algorithm is further evaluated in a three-dimensional environment $\mathcal{X} = [-15, 15] \times [0, 15] \times [0, 5] \subset \mathbb{R}^3$. The environment features a central wall at $(0, 0, 2.5)^T$ spanning 13\,m vertically with a critical narrow passage of 1\,m height at $z = 4.5$\,m. All walls have thickness 0.3\,m, creating a challenging topology in which the optimal path must navigate through aligned openings. The start state is $x_{\text{start}} = (-13,10,3)^T$ and the goal state is $x_{\text{goal}} = (13,3,2)^T$, requiring navigation through the narrow gap and thereby demonstrating the capability of the algorithm in constrained 3D spaces.

\begin{figure}[t]
    \centering
    \includegraphics[width=0.7\linewidth]{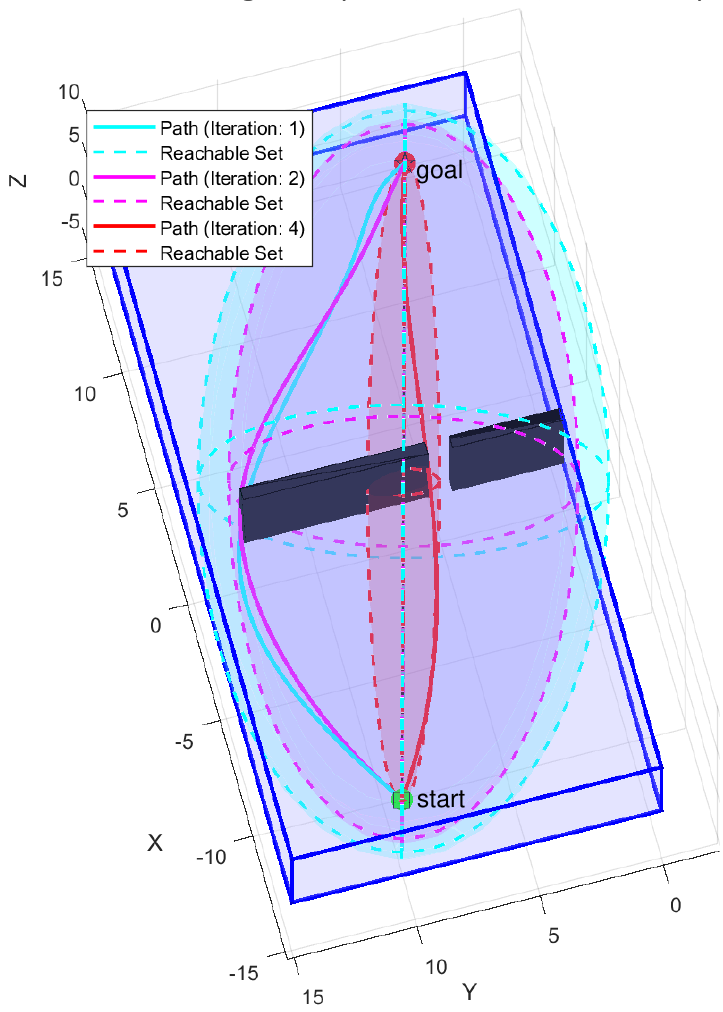}
    \caption{3D narrow passage scenario. Different colored paths and ellipsoidal reachable sets correspond to successive iterations; the green cube is the start, the red sphere is the goal.}
    \label{fig:3d_narrow}
\end{figure}

Figure~\ref{fig:3d_narrow} illustrates the iterative refinement. The cyan path (iteration~1) follows a conservative route with a large reachable ellipsoid; the magenta path (iteration~2) improves with a reduced set; and the red path (iteration~4) identifies the optimal trajectory through the narrow passage. The progressively shrinking ellipsoids demonstrate the effectiveness of the informed sampling strategy.
\subsection{Narrow passages and enclosure example}

The algorithm is further evaluated on a planar enclosure environment $\mathcal{X} = [0, 10] \times [0, 10] \subset \mathbb{R}^2$ featuring a rectangular enclosure with interior dimensions of approximately $6 \times 4$ units centered at $(5, 4)^T$. The enclosure walls have thickness 0.3 units and contain a critical narrow opening of width 0.8 units on the left wall at height $y = 4$. The start state $x_{\text{start}} = (3, 3)^T$ is positioned outside the enclosure, while the goal state $x_{\text{goal}} = (7, 5)^T$ lies within the enclosed region, necessitating passage through the narrow opening. This configuration exemplifies the narrow passage problem, in which the measure of the feasible region connecting start to goal is vanishingly small relative to the total free-space volume.

\begin{figure}[t]
    \centering
    \includegraphics[width=0.55\linewidth]{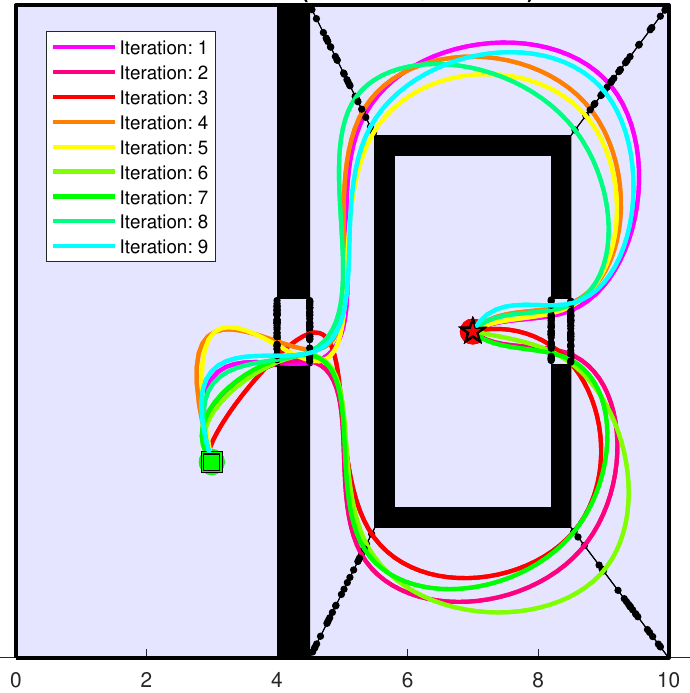}
    \caption{Narrow passage and enclosure scenario. Numbered regions (0, 1, 2) represent leaf nodes; region 012 indicates the critical shared face at the narrow passage.}
    \label{fig:narrow_gap}
\end{figure}

Figure~\ref{fig:narrow_gap} demonstrates the capability of the algorithm in extremely constrained environments. The hybrid zonotope decomposition explicitly captures the narrow passage as a shared face between leaf nodes, ensuring high sampling density precisely where it is needed. By sampling on $(n{-}1)$-dimensional shared faces rather than $n$-dimensional volumes, the proposed approach guarantees discovery of feasible paths through narrow openings.

% These results confirm that the hybrid zonotope framework effectively addresses the narrow passage problem—a fundamental challenge in motion planning—through dimension reduction and topology-aware sampling.

This experiment is conducted using the Open Motion Planning Library (OMPL)~\cite{sucan2012open} and the Planner Developer Tools (PDT)~\cite{gammell2022planner}. A total of 100 runs of Informed RRT*, BIT*~\cite{gammell2020batch}, AIT*, EIT*, and HZ-MP are executed on the \textit{defaultRandomRectangles2D} planning context.

% \subsection{Results Summary}\label{sec:overview-results-summary}

\begin{table}[t]
  \centering
   \caption{Summary statistics over 100 runs on the \textit{defaultRandomRectangles2D} benchmark.}
  \label{tab:random-rectangles-summary}
\resizebox{\columnwidth}{!}{%
\scriptsize
\setlength{\tabcolsep}{0.35em}%
\begin{tabular}{lcccccccccc}
  \toprule
  Planner &
  $t_\mathrm{init}^\mathrm{min}$ &
  $t_\mathrm{init}^\mathrm{med}$ &
  $t_\mathrm{init}^\mathrm{max}$ &
  $c_\mathrm{init}^\mathrm{min}$ &
  $c_\mathrm{init}^\mathrm{med}$ &
  $c_\mathrm{init}^\mathrm{max}$ &
  $c_\mathrm{final}^\mathrm{min}$ &
  $c_\mathrm{final}^\mathrm{med}$ &
  $c_\mathrm{final}^\mathrm{max}$ &
  Succ. \\
  \midrule
  In-RRT* &
    0.041 & $\infty$ & $\infty$ &
    0.971 & $\infty$ & $\infty$ &
    0.971 & $\infty$ & $\infty$ &
    0.12 \\
  BIT* &
    0.045 & 0.107 & $\infty$ &
    0.906 & 1.175 & $\infty$ &
    0.855 & 1.043 & $\infty$ &
    0.62 \\
  AIT* &
    0.053 & 0.129 & $\infty$ &
    0.898 & 1.069 & $\infty$ &
    0.864 & 1.011 & $\infty$ &
    0.71 \\
  EIT* &
    0.021 & 0.028 & 0.123 &
    \bfseries 0.873 & \bfseries 1.056 & \bfseries 1.740 &
    \bfseries 0.819 & \bfseries 0.858 & \bfseries 0.983 &
    \bfseries 1.00 \\
  HZ-MP &
    \bfseries 0.002 & \bfseries 0.003 & \bfseries 0.003 &
    1.445 & 2.255 & 3.552 &
    1.007 & 1.007 & 1.007 &
    \bfseries 1.00 \\
  \bottomrule
\end{tabular}%
}%

\vspace{0.5ex}
\noindent\footnotesize\emph{Note.} In-RRT* denotes Informed RRT*. Succ.\ denotes the success rate. All values rounded to three decimal places.

\end{table}

\begin{figure}[t]
  \centering
  \resizebox{\columnwidth}{!}{%
    \input{tikz/SuccessAxis_AllPlannersMedianCostAxis__.tikz}%
  }%
  \caption{\footnotesize
    \textbf{Top:} Percentage of runs that found a solution at any given time,
    with Clopper--Pearson (nonparametric) 99\% confidence intervals.
    \textbf{Bottom:} Median cost evolution and median initial solution cost,
    again with nonparametric 99\% confidence intervals.
  }
  \label{fig:overview-success-cost}
\end{figure}

Table~\ref{tab:random-rectangles-summary} and Figure~\ref{fig:overview-success-cost} summarize the results. HZ-MP achieves the fastest initial solution times by more than one order of magnitude compared with the sampling-based baselines while maintaining a $100\%$ success rate; Informed RRT* frequently fails within the time budget, as indicated by its infinite medians. EIT* also reaches $100\%$ success and yields the lowest median final cost, reflecting its effectiveness at anytime refinement. HZ-MP trades this long-horizon refinement quality for highly structured exploration: by sampling exclusively on $(n{-}1)$-dimensional shared faces and pruning via the ellipsotope, it reaches a near-optimal solution orders of magnitude faster, making it particularly suitable for settings in which low-latency initial solutions are critical.

\section{Conclusion}\label{sec-conclusion}

This paper presented HZ-MP, a motion planning algorithm that samples on $(n{-}1)$-dimensional shared faces of a hybrid zonotope decomposition, directly addressing the narrow passage problem while preserving probabilistic completeness and asymptotic optimality. Ellipsotope-informed pruning accelerates convergence, as confirmed by experiments in challenging 2D and 3D scenarios. Future work will extend the framework to dynamic environments and kinodynamic constraints.

\balance
\bibliographystyle{ACM-Reference-Format}
\bibliography{BibTex_2025}
\end{document}